\documentclass[11pt]{article}

\usepackage[final]{acl}
\usepackage{times}
\usepackage{latexsym}
\usepackage[T1]{fontenc}
\usepackage[utf8]{inputenc}
\usepackage{microtype}
\usepackage{inconsolata}
\usepackage{amsmath}
\usepackage{amssymb}
\usepackage{booktabs}
\usepackage{placeins}
\usepackage{xspace}

\usepackage{caption}
\usepackage{amsthm}
\usepackage{mathtools}
\usepackage{multirow}
\usepackage{makecell}

\usepackage{titlesec}
\titlespacing*{\section}{0pt}{1.1ex plus 0.2ex minus 0.2ex}{0.7ex plus 0.1ex}
\titlespacing*{\subsection}{0pt}{0.9ex plus 0.2ex minus 0.2ex}{0.5ex plus 0.1ex}

\newtheorem{lemma}{Lemma}
\newtheorem{theorem}{Theorem}

\usepackage{graphicx}
\usepackage{subcaption}
\usepackage{enumitem}
\usepackage{graphicx}
\usepackage{tikz}
\usepackage[english]{babel}
\providecommand*\circled[1]{%
  \tikz[baseline=(char.base)]{
    \ifnum#1>9
    \node[shape=circle,draw,inner sep=0.1pt] (char) {\footnotesize #1};
    \else
    \node[shape=circle,draw,inner sep=0.1pt] (char) {#1};
    \fi
  }
}

\usepackage[most]{tcolorbox}
\tcbset{
  dpbox/.style={
    colback=gray!8,
    colframe=black!20,
    boxrule=0.4pt,
    arc=2pt,
    left=4pt,right=4pt,top=3pt,bottom=3pt
  }
}

\newcommand{\squishlist}{
\begin{list}{$\bullet$}
{   \setlength{\itemsep}{0pt}
   \setlength{\parsep}{3pt}
   \setlength{\topsep}{3pt}
   \setlength{\partopsep}{0pt}
   \setlength{\leftmargin}{1.5em}
   \setlength{\labelwidth}{1em}
   \setlength{\labelsep}{0.5em} } }
\newcounter{Lcount}
\newcommand{\squishlisttwo}{
\begin{list}{\arabic{Lcount}. }
  { \usecounter{Lcount}
 \setlength{\itemsep}{0pt}
 \setlength{\parsep}{0pt}
 \setlength{\topsep}{0pt}
 \setlength{\partopsep}{0pt}
 \setlength{\leftmargin}{2em}
 \setlength{\labelwidth}{1.5em}
 \setlength{\labelsep}{0.5em} } }

\newcommand{\squishend}{\end{list} }

\newcommand{\bicache}{\textsc{BiCache}\xspace}

\DeclareRobustCommand{\ballnumber}[1]{\tikz[baseline=(myanchor.base)] \node[circle,fill=.,inner sep=1pt] (myanchor) {\color{-.}\bfseries\footnotesize #1};}

\title{Enabling KV Caching of Shared Prefix for Diffusion Language Models}

\author{
  Younghun Go\thanks{Equal contribution.}
  \quad
  Jaehoon Han\footnotemark[1]
  \quad
  Changyong Shin
  \quad
  Chuck Yoo\thanks{Corresponding authors.}
  \quad
  Gyeongsik Yang\footnotemark[2]
  \\
  Department of Computer Science and Engineering \\
  Korea University, Seoul, South Korea \\
  \texttt{yhgo@os.korea.ac.kr, hjhoon03@korea.ac.kr, cyshin@os.korea.ac.kr} \\
  \texttt{chuckyoo@os.korea.ac.kr, g\_yang@korea.ac.kr}
}

\begin{document}
\maketitle
\enlargethispage{2\baselineskip}
\begin{abstract}
Key-value (KV) caching for shared prefixes is essential for high-throughput large language model (LLM) serving, but it faces critical challenges in emerging diffusion language models (DLMs). In DLMs, bidirectional attention means that updating any token dynamically alters the entire context and its corresponding KVs. Thus, existing caching techniques developed for LLMs, which assume that KVs remain invariant once computed, corrupt the shared prefix KVs. Our experiments show that applying these techniques to DLMs causes model accuracy to collapse to near zero.

To unlock high-throughput DLM serving, we propose bidirectional prefix caching, \bicache, the first KV caching technique for shared prefixes in DLMs. \bicache is designed based on key observations from our comprehensive analysis: shared prefix KVs remain stable and reusable in shallow layers, while the depth of shallow layers depends on the fraction of shared prefix tokens in each request. Thus, \bicache dynamically identifies a safe layer depth for reusing shared prefix KVs and eliminates redundant computation. Evaluations demonstrate that \bicache significantly improves serving throughput by 36.3\%--98.3\% compared to existing techniques without accuracy collapse (only 0--1.8\% difference). The code and dataset are publicly available
at \url{https://github.com/OSSS-KU/BiCache}.

\end{abstract}

\section{Introduction}
Key-value (KV) caching is a fundamental optimization for efficient large language model (LLM) serving. In attention layers, token hidden states are projected into key and value tensors, and storing these tensors allows serving systems to reuse them in later generation steps instead of recomputing them, thereby improving serving latency and throughput. KV caching is particularly important for shared prefixes, where the same token sequence (e.g., system prompts) appears at the beginning of multiple requests and its KVs can be cached and reused across those requests. In real-world workloads, shared prefixes account for 85--97\% of prompt tokens \cite{preble}.

Most existing LLMs are based on autoregressive models (ARMs) that generate tokens sequentially \cite{chatgpt}. In ARMs, the KVs of previously generated tokens remain unchanged once computed. As a result, modern serving systems use ``shared prefix caching,'' which stores the KVs of a shared prefix once and reuses them across requests \cite{vllm}, making shared prefix caching a key optimization for high-throughput serving.

Recently, diffusion language models (DLMs) have emerged as a promising alternative to ARMs \cite{llada}. ARMs generate tokens sequentially, making latency grow with output length \cite{longform}. Moreover, their causal attention can lead to the reversal curse, where a model may correctly answer a relation in left-to-right but fail to infer its inverse \cite{curse, rethinking-curse}. In contrast, DLMs generate tokens at arbitrary positions through iterative denoising steps with bidirectional attention, allowing all positions to interact at each step and alleviating the strict left-to-right generation bottleneck of ARMs.

However, the bidirectional generation process of DLMs makes shared prefix caching much harder. Under bidirectional attention, updating one token changes the KVs of all other tokens, including those in the shared prefix. Therefore, shared prefix KVs are not invariant across steps or across requests, which is a key difference from ARMs. Directly reusing the shared prefix KVs as in ARMs can corrupt the attention context and severely collapse model accuracy. Our motivating experiments show that naively applying shared prefix caching designed for ARMs to DLMs drives accuracy collapse to near zero on diverse benchmarks (\S\ref{sec:3}).

Recent studies \cite{fastdllm, fastdllmv2, dllmcache} have introduced KV caching techniques for DLMs; however, they focus on reusing KVs within a single request and do not address shared prefix caching across requests. Because shared prefixes dominate practical workloads and are central to serving throughput, enabling shared prefix caching for DLMs remains a major open challenge.

To address this challenge, we propose bidirectional prefix caching, \bicache, the first shared prefix caching technique for DLMs to our knowledge. The key idea of \bicache is to selectively reuse KVs according to layer depth and the shared prefix ratio of a request. Through an in-depth analysis of how shared prefix KVs change across requests, layers, and denoising steps, we make three important observations (\S\ref{sec:4}). First, shared prefix KVs remain highly similar across requests in shallow layers, making shared prefix caching possible in those layers. Second, the depth of shallow layers that can safely reuse cached KVs is strongly correlated with the shared prefix ratio, i.e., the fraction of shared prefix tokens in each request. Third, although the remaining deep layers cannot directly reuse shared prefix KVs across requests, they can still reduce redundant computation within a request by reusing KVs with periodic refresh (\S\ref{sec:5}).

Based on the observations, \bicache introduces two practical mechanisms: shared prefix profiling and layer-partitioned caching. Shared prefix profiling determines the layer depth up to which shared prefix KVs can be safely reused without harming accuracy. During serving, layer-partitioned caching directly reuses cached KVs of the shared prefix in shallow layers and applies periodic refresh in deep layers to remove redundant KV computation.

The major contributions of this study are:
\squishlist
\item Characterize how shared prefix KVs change in DLMs across requests, layers, and steps.
\item Present \bicache, to our knowledge the first shared prefix caching technique for DLMs.
\item Demonstrate significant throughput improvements by 36.3\%--98.3\% without accuracy collapse (only 0--1.8\% differences)

\squishend

\section{Background}

\subsection{DLM}

LLaDA \cite{llada} is a transformer-based DLM and a foundation for many later DLMs. Fig. \ref{fig:workflow} shows its generation process. It has three key hyperparameters (\ballnumber{0}): the number of transformer layers $L$, the output length $g$, and the number of denoising steps $s$. Throughout the paper, we use the term ``step'' to refer to a denoising step. Over $s$ steps, the model generates $g$ tokens, predicting either $\lfloor g/s \rfloor$ or $\lfloor g/s \rfloor{+}1$ tokens per step (e.g., $g{=}8$, $s{=}3$ gives $\{3,3,2\}$ tokens per step).

When a new request arrives, LLaDA forms an input string by appending $g$ \texttt{[MASK]} tokens to the prompt tokens (\ballnumber{1}). At each step, the input string is processed by a forward pass through the $L$ transformer layers (\ballnumber{2}). In each step, DLM applies bidirectional attention and denoises the designated number of \texttt{[MASK]} tokens so that the predicted words (tokens) are filled in. In Fig. \ref{fig:workflow}, step 1 denoises three tokens (as explained above), so the first step denoises ``explains'', ``prefix'', and ``caching'' (\ballnumber{3}). The input string with three denoised tokens is passed to the next step, where another denoising happens using the $L$ transformer layers. The denoised tokens from the previous step remain unchanged, and the model denoises the designated number of mask tokens, three in the example, producing ``clearly'', ``why'', and ``impossible.'' This iterative process continues (\ballnumber{4}) for $s$ steps until all \texttt{[MASK]} tokens are denoised.

\begin{figure}[t]
\centering
\includegraphics[width=\linewidth]{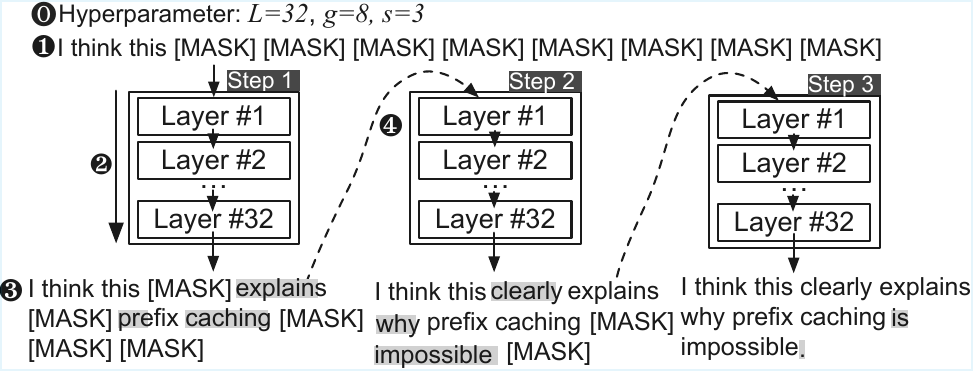}
\caption{Workflow of token generation in DLMs.}
\label{fig:workflow}
\end{figure}

\subsection{Key-Value and Bidirectional Attention}\label{sec:2.2}

Here, we detail the DLM layer. We explain how hidden states are projected into KVs and how bidirectional attention aggregates them to update token representations. Finally, we explain how this bidirectional nature causes KVs to change across steps.

\noindent\textbf{KV calculation.}
At transformer layer $\ell \in \{1,{\ldots},L\}$ and step $t \in \{1,{\ldots},s\}$, the hidden state of a token is denoted by $\mathbf{h}_{\ell,t} \in \mathbb{R}^d$, where $d$ is the hidden dimension (the feature size of each token representation).
The layer computes the key and value vectors using its projection matrices $W^K_{\ell} \in \mathbb{R}^{d_k \times d}$ and $W^V_{\ell} \in \mathbb{R}^{d_v \times d}$. Here, $d_k$ and $d_v$ denote the dimensions of the key and value vectors, respectively; the key vector ($\mathbf{k}_{\ell,t} \in \mathbb{R}^{d_k}$) is used to compute attention weights, and the value vector ($\mathbf{v}_{\ell,t} \in \mathbb{R}^{d_v}$) is aggregated according to those weights. The computation is
\begin{equation}\label{eq:1}
\setlength{\abovedisplayskip}{2pt}
\setlength{\belowdisplayskip}{2pt}
\mathbf{k}_{\ell,t} = W^K_{\ell}\mathbf{h}_{\ell,t}, \quad
\mathbf{v}_{\ell,t} = W^V_{\ell}\mathbf{h}_{\ell,t}.
\end{equation}
For conciseness, throughout the paper, we denote the key--value pair by $\mathbf{KV}_{\ell,t} \triangleq (\mathbf{k}_{\ell,t}, \mathbf{v}_{\ell,t})$.

\noindent\textbf{Bidirectional attention.}
Using the keys and values, layer $\ell$ computes an attention output $\mathbf{o}_{\ell,t}$ to gather context from the input string. Specifically, it is calculated using a query vector $\mathbf{q}_{\ell,t}$ (also projected from $\mathbf{h}_{\ell,t}$ using $W^Q_{\ell}$) that is compared against the keys of all tokens in the input string, where $n$ denotes the number of tokens in the input string:
\begin{equation}\label{eq:2}
\setlength{\abovedisplayskip}{4pt}
\setlength{\belowdisplayskip}{-2pt}
\mathbf{o}_{\ell,t} =
\sum_{j=1}^{n}
\mathrm{softmax}_{j}\!\left(
\frac{\mathbf{q}_{\ell,t} \cdot \mathbf{k}^{(j)}_{\ell,t}}{\sqrt{d_k}}
\right)
\mathbf{v}^{(j)}_{\ell,t}.
\end{equation}

Here, $\mathbf{o}_{\ell,t}$ is a weighted sum of value vectors $\mathbf{v}^{(j)}_{\ell,t}$, representing aggregated global context. The weights are given by the softmax-normalized scaled dot products, measuring the relevance of the $j$-th token to the current token at step $t$. To produce each token's $\mathbf{o}_{\ell,t}$, the query attends to all $n$ tokens.
This output $\mathbf{o}_{\ell,t}$ updates the next layer's hidden state $\mathbf{h}_{{\ell{+}1},t}$. After $L$ layers, the final hidden state is transformed into vocabulary logits to predict the most likely word (e.g., ``explains'' in Fig.~\ref{fig:workflow}).

Importantly, with bidirectional attention, denoising a token during a step changes $\mathbf{o}_{\ell,t}$ for other tokens. This propagates to their hidden states and modifies their key and value vectors via Eq.~(\ref{eq:1}). For example, in Fig.~\ref{fig:workflow}, even after the token ``explains'' has been denoised, its KVs continue to be updated in later steps as the surrounding tokens change.

\subsection{Shared Prefix and Its Caching}\label{sec:2.3}

Shared prefixes are common in practice. For example, many production LLM serving systems prepend a fixed system prompt to every user request to provide formatting guidelines, safety and ethical constraints, and role instructions \cite{systemprompt}. See Appendix \ref{app:1} for examples of real-world shared prefixes.
Also, in multi-turn conversations, shared prefixes naturally arise. When a user submits a new request within the same session, the input string to the model typically includes previous user requests and responses as a shared prefix so that the model can maintain conversational context \cite{multiturn}.
As a result, a substantial portion of the prompt tokens is repeated across requests, accounting for up to 97\% of the total prompt token length in practical deployments \cite{preble}.

Shared prefix caching avoids redundant KV computation by 1) computing the KVs of a shared prefix once and 2) reusing them for later requests whose inputs contain the same prefix. In ARMs \cite{vllm}, once the KVs of the shared prefix are cached, they can be reused directly for other requests at every layer and across all steps. This is possible because, under causal attention in ARMs, the KV values of the prefix remain unchanged even when additional suffix tokens are appended. Prior work reports that shared prefix caching in ARMs can provide $\sim$10$\times$ latency improvement, making it an essential optimization for practical LLM serving systems \cite{promptcache}.

\section{Motivation: New Shared Prefix Caching Technique is Required for DLMs}\label{sec:3}
In this section, we apply existing shared prefix caching designed for ARMs to DLMs and show that it causes high accuracy drop, which motivates the need for a new technique.

\subsection{Setup}\label{sec:3.1}
All experiments are conducted on an NVIDIA B200 GPU with 180 GB memory. We evaluate LLaDA \cite{llada} that has $L{=}32$. We set $g{=}256$, and $s{=}128$; thus, two tokens are unmasked at each step. We use a batch size of 1 to fit the model in GPU memory, which is consistent with the configuration used in other studies \cite{d3llm,tidar}.

LLaDA is evaluated on widely used benchmarks: 1) ARC-Challenge-Chat (science multiple-choice QA), 2) GPQA (graduate-level question answering), and 3) MATH-500 (competition-style mathematics). Similar to existing LLM serving systems and studies \cite{relayattention, gsm8k}, we prepend two system prompts to each request. First, we attach the real-world system prompt of xAI’s Grok-4 model \cite{grok4} (details in Appendix \ref{app:1}), which provides role and safety instructions. Second, we prepend another system prompt that enforces a specific output format (i.e., attaching \texttt{\#\#} before the final answer) required for strict answer matching in the benchmarks (described below). Across benchmarks, the shared prefix accounts for 85\% of the request on average (details in Appendix \S\ref{app:ratio}), consistent with real-world workloads~\cite{preble}.

We evaluate two metrics: 1) throughput and 2) accuracy. Throughput is measured as the number of generated tokens per second (tokens/s), averaged over all requests in each benchmark. For accuracy, following prior work \cite{strictmatch}, we apply strict matching between the model output and the ground-truth answer. From the LLaDA output, the final answer following the \texttt{\#\#} symbol is considered correct only if it exactly matches the ground-truth answer in each benchmark. Accuracy is reported as the percentage of correct outputs over all requests.

We compare the following baselines:
\squishlist
  \item \textbf{No caching}: LLaDA inference without any shared prefix caching.
  \item \textbf{vLLM~\cite{vllm}}: LLaDA with vLLM, a representative shared prefix caching technique for ARMs. For requests sharing the same prefix, vLLM computes the KVs from the shared prefix once, and then reuses them across requests for all layers and steps.
\squishend

{\captionsetup[figure]{skip=3pt}
\begin{figure}[t]
\centering
\begin{subfigure}[t]{0.45\linewidth}
\centering
\includegraphics[width=\linewidth]{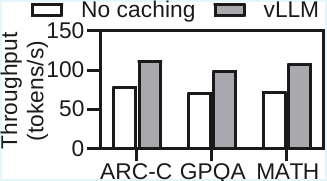}
\caption{Throughput.}
\label{fig:2a}
\end{subfigure}
\hfill
\begin{subfigure}[t]{0.44\linewidth}
\centering
\includegraphics[width=\linewidth]{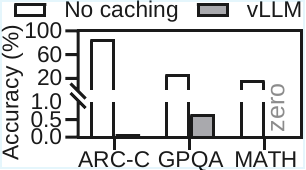}
\caption{Accuracy.}
\label{fig:2b}
\end{subfigure}
\caption{Motivating experiment.}
\label{fig:2}
\end{figure}
}

\subsection{Motivating Experiment Results}\label{sec:3.2}
Fig. \ref{fig:2a} shows the throughput across benchmarks.
Compared to no caching, vLLM consistently achieves higher throughput on all benchmarks. Specifically, vLLM improves throughput by $\sim$42.7\%, $\sim$38.5\%, and $\sim$48.5\% on ARC-Challenge-Chat, GPQA, and MATH-500.

However, existing techniques exhibit severe accuracy collapse. Fig. \ref{fig:2b} shows the accuracy on the same benchmarks. In contrast to the throughput gains in Fig. \ref{fig:2a}, vLLM causes accuracy collapse across all benchmarks. Specifically, while no caching achieves 86.1\%, 27.9\%, and 18.2\% accuracy on the three benchmarks, vLLM shows near-zero accuracy (0.1\%, 0.7\%, and 0\%), indicating that it fails to generate correct answers. The results demonstrate that, although the shared prefix caching technique has potential to increase throughput, the technique designed for ARMs leads to significant accuracy collapse because the KV values of shared prefix in DLMs continuously change.

{\setlength{\abovecaptionskip}{2pt}
\begin{figure*}[!t]
\centering
\begin{minipage}[c]{0.27\linewidth}
  \centering
  \includegraphics[width=\linewidth]{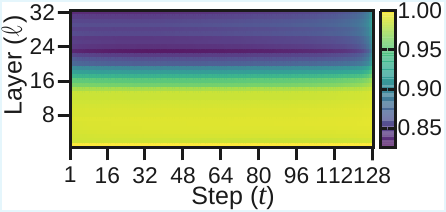}
  \captionof{figure}{Heatmap of $\mathbf{sim}_{\ell,t}$ for all layers and steps.}
  \label{fig:3}
\end{minipage}\hfill%
\begin{minipage}[c]{0.45\linewidth}
  \refstepcounter{figure}
  \centering
  \begin{subfigure}[t]{0.49\linewidth}
    \centering
        \includegraphics[width=\linewidth]{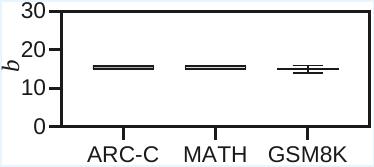}
    \subcaption{Per task type.}
    \label{fig:4b}
  \end{subfigure}\hfill%
  \begin{subfigure}[t]{0.49\linewidth}
    \centering
    \includegraphics[width=\linewidth]{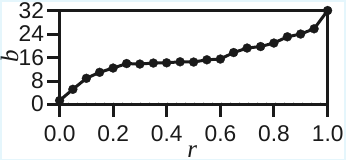}
    \subcaption{Per shared prefix ratio.}
    \label{fig:4a}
  \end{subfigure}
  \caption*{Figure \thefigure: Shallow layer depth determination.}
\end{minipage}
\begin{minipage}[c]{0.25\linewidth}
  \centering
  \includegraphics[width=\linewidth]{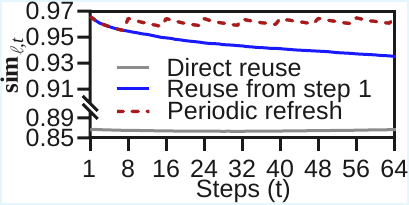}
  \captionof{figure}{Deep layers $\mathbf{sim}_{\ell,t}$ under periodic refresh.}
  \label{fig:5}
\end{minipage}
\end{figure*}
}
\section{Key Observations}\label{sec:4}
To design shared prefix caching for DLMs, we present three key observations (O1--O3) on KV value changes in the shared prefix.

\subsection{KV similarity is Layer-dependent but Stable across steps}\label{sec:4.1}
In ARMs, shared prefix caching is feasible because the KVs of the shared prefix remain identical across requests, allowing the KVs computed once to be reused at all layers and steps. In DLMs, however, bidirectional attention changes KV values of shared prefix during denoising, so it is unclear whether shared prefix KV reuse across requests is still possible. To answer this, we examine i) whether the shared prefix KVs from one request remain similar in other requests, and ii) how this similarity varies across layers and steps. We follow the same setup as in \S\ref{sec:3.1}, and use WildChat-4.8M~\cite{wildchat}, a large-scale conversational dataset with diverse prompt lengths and tasks. Because the dataset lacks ground-truth answers, we do not use it for evaluation as in Fig.~\ref{fig:2}. Instead, we use it to analyze the KV computation behavior of DLM.

We measure the cosine similarity between 1) the shared prefix KVs over all shared prefix token positions computed from the shared prefix alone, which are cached for reuse across requests, and 2) the corresponding shared prefix KVs at the same token positions obtained when each request is processed without caching during denoising, across layers and steps. For layer $\ell \in \{1,2,{\ldots},L\}$ and step $t \in \{1,2,{\ldots},s\}$, we denote them by $\mathbf{KV}^{\$}_{\ell}$ and $\mathbf{KV}^{\mathrm{ref}}_{\ell,t}$, respectively.
\footnote{The cached KVs do not carry a step index $t$ as they are computed once at the first step, as in vLLM, and then reused across all denoising steps. Each KV representation is flattened into a single vector before computing cosine similarity.} We define
\begin{equation}\label{eq:kv_sim_step}
\setlength{\abovedisplayskip}{2pt}
\setlength{\belowdisplayskip}{2pt}
\mathbf{sim}_{\ell,t} \triangleq \cos\!\Big(\mathbf{KV}^{\$}_{\ell},\, \mathbf{KV}^{\mathrm{ref}}_{\ell,t}\Big).
\end{equation}
The value of $\mathbf{sim}_{\ell,t}$ ranges from $-1$ to $1$, where values closer to $1$ indicate higher similarity.

Fig. \ref{fig:3} shows a heatmap of $\mathbf{sim}_{\ell,t}$ over layers and steps, averaged across requests. Although bidirectional attention makes the similarity lower than 1, the results reveal two patterns that make shared prefix caching feasible in DLMs.
First, $\mathbf{sim}_{\ell,t}$ is much higher in shallow layers than in deep layers: shallow layers, $\ell \in \{1,{\ldots},16\}$, show an average similarity of 0.98, whereas deep layers, $\ell \in \{17,{\ldots},32\}$, show 0.86. This indicates that shared prefix KV reuse across requests is feasible mainly in shallow layers.

Second, within each layer, $\mathbf{sim}_{\ell,t}$ remains nearly constant across steps. The standard deviation across steps is only 0.007 on average and at most 0.016, indicating little change in similarity over steps.

\begin{tcolorbox}[dpbox]
\textbf{O1:} In shallow layers, shared prefix KVs are highly similar and remain stable across steps. Therefore, shared prefix KVs in shallow layers can be cached and reused across requests.
\end{tcolorbox}

\subsection{Shallow layer depth is correlated with shared prefix ratio}\label{sec:4.2}
O1 shows that KV reuse across requests is feasible in shallow layers. We next investigate how many layers can be treated as shallow for a given input. To quantify this depth, we define $b$ as the largest layer depth such that $\mathbf{sim}_{\ell,t} \ge \tau$ holds for all layers $\ell \in \{1,{\ldots},b\}$ and all steps $t \in \{1,{\ldots},s\}$, where $b \le L$. In other words, layers $\{1,{\ldots},b\}$ are the shallow layers whose shared prefix KVs can be reused without accuracy collapse.\footnote{The first layer's KVs are projected from the hidden states before bidirectional attention (\S\ref{sec:2.2}); thus the shared prefix KVs at the first layer are identical to those computed during denoising, so always their similarity is 1 and hence $b \ge 1$.} Here, $\tau$ is the similarity threshold above which $\mathbf{KV}^{\$}_{\ell}$ can be reused without accuracy loss. From our analysis, we set $\tau {=} 0.97$ through empirical analysis (to be explained in \S\ref{sec:6.6}).

For each DLM, two input characteristics can affect $b$: 1) task type, such as math, coding, or general QA, and 2) shared prefix ratio $r$, i.e., the fraction of shared prefix tokens in the input string. We examine how these two characteristics affect $b$.

First, we analyze how $b$ varies across task types. We measure $b$ as Eq. (\ref{eq:kv_sim_step}) on ARC-Challenge-Chat, MATH-500, and GSM8K when $r$ is 0.9. Different $r$ values also show similar results. Fig. \ref{fig:4b} shows that $b$ remains consistent across three benchmarks, with a standard deviation of only 0.43. This indicates that task type is not the main factor determining the shallow layer depth.

Second, we analyze how $b$ changes with $r$. Fig.~\ref{fig:4a} shows that $b$ increases as $r$ increases. The reason is that, under bidirectional attention, the KV of each shared prefix token is affected by the entire input string, including both shared prefix tokens and non-shared tokens. When $r$ is small, non-shared tokens occupy a larger fraction of the input, so their effect on the shared prefix KVs is larger, and the similarity drops below $\tau$ at earlier layers. In contrast, when $r$ is large, the input is dominated by shared prefix tokens, so the influence of non-shared tokens decreases. As a result, the shared prefix KVs remain similar across requests for more layers, which increases $b$. We theoretically prove this relationship in Appendix \S\ref{app:2}.\footnote{$b$ also remains stable and consistent under small changes in system prompt tokens (e.g., instruction wording or metadata fields); see Appendix \S\ref{app:sys}.}

{
\begin{tcolorbox}[dpbox]
\textbf{O2:} The shallow layer depth $b$ is correlated with the shared prefix ratio $r$.
\end{tcolorbox}
}

{
\begin{figure*}[t]
\centering
\includegraphics[width=.9\linewidth]{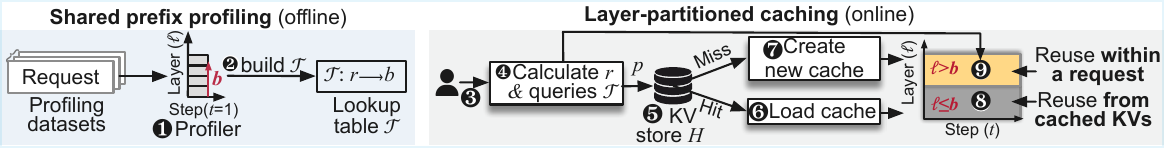}
\caption{Overview of \bicache. \bicache performs one-time offline profiling to build a lookup table that maps each shared prefix ratio to the maximum depth up to which KVs can be safely reused. At runtime, \bicache uses the shared prefix ratio of each incoming request to determine the reusable depth from the lookup table. \bicache then reuses cached KVs for the shared prefix across requests in the selected shallow layers. In deeper layers, \bicache recomputes and periodically refreshes KVs within each request.}
\label{fig:dspc_overview} 
\end{figure*}
}

\subsection{Deep layers can be cached \emph{within} a request}\label{sec:4.3}

O1 and O2 show that reuse of shared prefix KVs across requests is feasible for shallow layers $\ell \in \{1,{\ldots},b\}$. We now examine whether the remaining deep layers $\ell \in \{b{+}1,{\ldots},L\}$ can still benefit from KV caching. To do so, we use the same similarity metric $\mathbf{sim}_{\ell,t} \triangleq \cos\!\big(\mathbf{KV}^{\$}_{\ell}, \mathbf{KV}^{\mathrm{ref}}_{\ell,t}\big)$, but now analyze it for deep layers within a request.

Fig. \ref{fig:5} compares three settings over the deep layers, while reusing cached KVs for the shallow layers as in O1. First, when $\mathbf{KV}^{\$}_{\ell}$ is directly reused from another request, as in Fig. \ref{fig:3}, $\mathbf{sim}_{\ell,t}$ remains low, with an average of 0.86 (black line). Second, when $\mathbf{KV}^{\$}_{\ell}$ is computed once from the current request at the first step of each layer and then reused for later steps, $\mathbf{sim}_{\ell,t}$ gradually decreases as $t$ increases, reaching 0.93 on average (blue line), because the cached $\mathbf{KV}^{\$}_{\ell}$ becomes stale.

Third, based on this observation, we test a periodic-refresh design in which $\mathbf{KV}^{\$}_{\ell}$ is recomputed for each layer of the current request every 8 steps and reused only until the next refresh. This design keeps $\mathbf{sim}_{\ell,t}$ high at 0.96 (red dotted line), close to the threshold of 0.97 for shallow layers, showing that deep layers can still support effective KV reuse within a request when stale KVs are periodically refreshed.

\begin{tcolorbox}[dpbox]
\textbf{O3:} For deep layers, shared prefix KVs can be reused \textit{within} a request for multiple steps when they are periodically refreshed.
\end{tcolorbox}

\section{Design}\label{sec:5}\vspace{-0.25em}
Fig.~\ref{fig:dspc_overview} presents the workflow of \bicache, which consists of two mechanisms: 1) shared prefix profiling and 2) layer-partitioned caching. Given a DLM to be served, \bicache first performs offline shared prefix profiling to determine the shallow layer depth $b$ for different shared prefix ratios $r$. Based on O1, it measures $\mathbf{sim}_{\ell,t}$ at the first step for each request and layer using predefined profiling datasets (\ballnumber{1}). Based on O2, it then builds a lookup table $\mathcal{T}$ that maps each $r$ to the corresponding shallow layer depth $b$ (\ballnumber{2}). This profiling is performed offline once per model before serving user requests.

When a user request arrives online (\ballnumber{3}), \bicache performs layer-partitioned caching. It first computes $r$ for the request and queries $\mathcal{T}$ to obtain $b$ (\ballnumber{4}). \bicache then retrieves the cached shared prefix KVs. The cached KVs are stored in KV store $H$ (\ballnumber{5}) that maps each shared prefix string $p$ to its cached KVs. For a new request, \bicache extracts the shared prefix $p$ and looks up the corresponding entry in $H$.

On a cache hit, \bicache loads the cached KVs and reuses them for shallow layers $1$ to $b$ (\ballnumber{6}). On a miss, it creates a new entry in $H$ (\ballnumber{7}) by computing the KVs of $p$ on the DLM for all layers.

For each layer $\ell$, \bicache then determines whether $\ell \le b$ (shallow layers) or $\ell > b$ (deep layers). If $\ell \le b$, \bicache directly reuses the cached shared prefix KVs (\ballnumber{8}). Otherwise, following O3, it computes the shared prefix KVs within the current request and reuses them across steps with periodic refresh every $\Delta$ steps (\ballnumber{9}). We explain each mechanism in detail below.

\subsection{Shared Prefix Profiling}\label{sec:5.1}
We explain 1) profiling dataset, 2) profiler, and 3) lookup table $\mathcal{T}$.

\noindent\textbf{Profiling dataset.}
We prepare a profiling dataset to learn the relationship between $b$ and $r$. We use WildChat-4.8M \cite{wildchat} as a source of diverse user prompts and construct requests with different $r$ by combining each prompt with the shared prefix (system prompt \cite{relayattention}). For each request, $r$ is calculated as the ratio of shared prefix tokens to all tokens in the input string.
From the requests, we uniformly sample $M$ requests for each $r \in \{0.01, {\ldots}, 1.00\}$, resulting in $100 \times M$  requests. We set $M=500$ empirically (\S\ref{sec:6.6}).

\noindent\textbf{Profiler.}
Using the profiling dataset, \bicache profiles each request as follows. We follow the same method to measure KV similarity used in Fig. \ref{fig:3}. For each request with ratio $r$, we prepare two inputs: 1) the string with shared prefix only and 2) the full input of the shared prefix and remaining prompt tokens. Running both inputs at the first denoising step for every layer yields two KV values for the shared prefix: $\mathbf{KV}^{\$}_{\ell}$ (cached KV) and $\mathbf{KV}^{\mathrm{ref}}_{\ell,1}$ (KV computed per layer without caching). We then calculate $\mathbf{sim}_{\ell,1}$ using Eq. \eqref{eq:kv_sim_step}.

\noindent\textbf{Lookup table $\mathcal{T}$.}
From the profiled $\mathbf{sim}_{\ell,1}$ values, \bicache defines $b$ as the largest layer index such that all layers from $1$ to $b$ satisfy $\mathbf{sim}_{\ell,1} \ge \tau$, where $\tau = 0.97$ (determined empirically in \S\ref{sec:6.6}).

To determine $b$ for each $r$, we use the profiled results on $M$ requests for every $r \in \{0.01, {\ldots}, 1.00\}$. Each request $i$ produces a set of $\mathbf{sim}_{\ell,1}$ values across layers, from which \bicache determines $b_i$ as the largest layer index satisfying the threshold $\tau$. \bicache then computes the average of $\{b_i\}_{i=1}^{M}$ and uses it as  representative $b$ for ratio $r$. Repeating this procedure for all $r$ constructs $\mathcal{T}$ that is used in layer-partitioned caching (\S\ref{sec:5.2}).

\subsection{Layer-partitioned Caching}\label{sec:5.2}
When a new user request arrives, \bicache first computes $r$ and retrieves $b$ from $\mathcal{T}$. It then reuses the cached KVs in shallow layers as follows.

\noindent\textbf{Shared prefix KVs in shallow layers.}
We first identify the cached shared prefix KVs. Specifically, we extract the shared prefix $p$ from the user request and apply a hash function $h(\cdot)$ to obtain an identifier $k = h(p)$. We then look up $k$ in a hash-indexed KV table $H$, which maps shared prefix identifiers to cached KV values.

On a cache hit ($k \in H$), \bicache retrieves the cached KVs $\{\mathbf{KV}^{\$}_{\ell}\}_{\ell=1}^{L}$ from $H[k]$. On a miss ($k \notin H$), \bicache computes the KVs for $p$ at the first step for all layers and stores them in $H[k]$.

Let $\mathbf{KV}^{\mathrm{use}}_{\ell,t}$ denote the KV values used for the current request at layer $\ell$ and step $t$. Then, for shallow layers $\ell \le b$, the KVs are determined as
\begin{equation}
\setlength{\abovedisplayskip}{3pt}
\setlength{\belowdisplayskip}{3pt}
\mathbf{KV}^{\mathrm{use}}_{\ell,t} = \mathbf{KV}^{\$}_{\ell},
\;
\forall\, \ell \le b,\; t \in \{1,{\ldots},s\}.
\end{equation}
That is, the cached KVs are reused in shallow layers, eliminating redundant KV computation.

\noindent\textbf{Shared prefix KVs in deep layers.}
For deep layers $\ell > b$, \bicache avoids KV reuse across requests but enables reuse within a request through periodic refresh. Specifically, it recomputes the KVs every $\Delta$ steps, where $\Delta$ denotes the refresh interval. We define this as follows:

\begin{equation}
\setlength{\abovedisplayskip}{-6pt}
\setlength{\belowdisplayskip}{3pt}
\mathbf{KV}^{\mathrm{use}}_{\ell,t} =
\smash[b]{
\begin{cases}
\mathbf{KV}^{\mathrm{ref}}_{\ell,t}, \; \ell>b,\ (t{-}1)\bmod \Delta {=} 0,\\
\mathbf{KV}^{\mathrm{use}}_{\ell,t-1}, \; \ell>b,\ \text{otherwise.}
\end{cases}
}
\end{equation}

For each step $t$, \bicache checks whether $t$ corresponds to a refresh step occurring every $\Delta$ steps (i.e., when $(t-1)\bmod \Delta = 0$). If so, \bicache recomputes the KVs of shared prefix from the current request at the step, denoted as $\mathbf{KV}^{ref}_{\ell,t}$. Otherwise, it reuses the KVs from the previous step $\mathbf{KV}^{\mathrm{use}}_{\ell,t-1}$. We choose $\Delta$ to balance accuracy drop and throughput improvement, and set $\Delta = 16$ based on \S\ref{sec:6.6}.

\begin{table*}[t]
\centering
\small
\setlength{\tabcolsep}{1pt}

\begin{tabular}{cccc cc cc cc}
\toprule
\multirow{2}{*}{}
& \multirow{2}{*}{\textbf{Method}}
& \multicolumn{2}{c}{\textbf{ARC-C}}
& \multicolumn{2}{c}{\textbf{GPQA}}
& \multicolumn{2}{c}{\textbf{MATH}}
& \multicolumn{2}{c}{\textbf{GSM8K}} \\

&
& \textbf{Thr.} & \textbf{Acc.}
& \textbf{Thr.} & \textbf{Acc.}
& \textbf{Thr.} & \textbf{Acc.}
& \textbf{Thr.} & \textbf{Acc.} \\
\midrule

\multirow[c]{3}{*}{\makecell[c]{\S\ref{sec:6.2}}}
& No caching
& 79.4 & 86.1
& 72.0 & 27.9
& 72.9 & 18.2
& 50.0 & 56.3 \\

& vLLM
& 113.3 {\scriptsize(+42.7\%)}
& 0.1 {\scriptsize(-86.0\%)}
& 99.7 {\scriptsize(+38.5\%)}
& 0.7 {\scriptsize(-27.2\%)}
& 108.4 {\scriptsize(+48.5\%)}
& 0.0 {\scriptsize(-18.2\%)}
& 94.2 {\scriptsize(+88.4\%)}
& 30.9 {\scriptsize(-25.4\%)} \\

& \textbf{\bicache}
& \textbf{111.7} {\scriptsize\textbf{(+40.7\%)}}
& \textbf{85.3} {\scriptsize\textbf{(-0.8\%)}}
& \textbf{98.1} {\scriptsize\textbf{(+36.3\%)}}
& \textbf{27.5} {\scriptsize\textbf{(-0.4\%)}}
& \textbf{107.0} {\scriptsize\textbf{(+46.8\%)}}
& \textbf{18.8} {\scriptsize\textbf{(+0.6\%)}}
& \textbf{91.4} {\scriptsize\textbf{(+82.8\%)}}
& \textbf{55.5} {\scriptsize\textbf{(-0.8\%)}} \\
\midrule

\multirow[c]{2}{*}{\makecell[c]{\S\ref{sec:6.3}}}
& Fast-dLLM
& 85.1 & 84.3
& 82.1 & 27.0
& 76.5 & 16.4
& 51.7 & 63.1 \\

& \textbf{\shortstack[c]{\bicache\\+Fast-dLLM}}
& \makecell[c]{\textbf{129.0}\\{\scriptsize\textbf{(+51.6\%)}}}
& \makecell[c]{\textbf{84.3}\\{\scriptsize\textbf{(+0.0\%)}}}
& \makecell[c]{\textbf{124.6}\\{\scriptsize\textbf{(+51.8\%)}}}
& \makecell[c]{\textbf{27.2}\\{\scriptsize\textbf{(+0.2\%)}}}
& \makecell[c]{\textbf{123.3}\\{\scriptsize\textbf{(+61.2\%)}}}
& \makecell[c]{\textbf{14.6}\\{\scriptsize\textbf{(-1.8\%)}}}
& \makecell[c]{\textbf{102.5}\\{\scriptsize\textbf{(+98.3\%)}}}
& \makecell[c]{\textbf{63.9}\\{\scriptsize\textbf{(+0.8\%)}}} \\
\bottomrule
\end{tabular}

\caption{
Throughput (tokens/s) and accuracy (\%) across benchmarks.
The upper part (\S\ref{sec:6.2}) reports main improvements;
the lower part (\S\ref{sec:6.3}) shows seamless integration results.
Percentages of \bicache are relative to no caching (upper part)
and Fast-dLLM (lower part). Bold: \bicache results.
}
\label{tab:1}
\end{table*}

\begin{table*}[t]
\centering
\small
\setlength{\tabcolsep}{1pt}
\begin{tabular}{cccc cc cc cc}
\toprule
\multirow{2}{*}{\textbf{}}
& \multirow{2}{*}{\textbf{Method}}
& \multicolumn{2}{c}{\textbf{ARC-C}}
& \multicolumn{2}{c}{\textbf{GPQA}}
& \multicolumn{2}{c}{\textbf{MATH}}
& \multicolumn{2}{c}{\textbf{GSM8K}} \\
&
& \textbf{Thr.} & \textbf{Acc.}
& \textbf{Thr.} & \textbf{Acc.}
& \textbf{Thr.} & \textbf{Acc.}
& \textbf{Thr.} & \textbf{Acc.} \\
\midrule

\multirow[c]{5}{*}{\makecell[c]{\S\ref{sec:6.4}}}
& No caching
& 65.7 & 86.2
& 60.5 & 27.9
& 60.6 & 14.6
& 43.5 & 41.2 \\

& vLLM
& 90.61 {\scriptsize(+37.9\%)} & 0.0 {\scriptsize(-86.2\%)}
& 80.4 {\scriptsize(+32.9\%)} & 0.0 {\scriptsize(-27.9\%)}
& 87.2 {\scriptsize(+43.9\%)} & 0.0 {\scriptsize(-14.6\%)}
& 75.5 {\scriptsize(+73.6\%)} & 10.5 {\scriptsize(-30.7\%)} \\

& \textbf{\bicache}
& \textbf{89.8} {\scriptsize\textbf{(+36.7\%)}} & \textbf{85.1} {\scriptsize\textbf{(-1.1\%)}}
& \textbf{79.9} {\scriptsize\textbf{(+32.1\%)}} & \textbf{30.1} {\scriptsize\textbf{(+2.2\%)}}
& \textbf{86.5} {\scriptsize\textbf{(+42.8\%)}} & \textbf{17.6} {\scriptsize\textbf{(+3.0\%)}}
& \textbf{74.3} {\scriptsize\textbf{(+70.8\%)}} & \textbf{35.0} {\scriptsize\textbf{(-6.2\%)}} \\

\cmidrule(l){2-10}

& Fast-dLLM
& 79.8 & 84.4
& 77.1 & 27.2
& 72.1 & 19.0
& 49.2 & 73.0 \\

& \textbf{\shortstack[c]{\bicache\\+Fast-dLLM}}
& \raisebox{0.3\normalbaselineskip}[0pt][0pt]{\makecell[c]{\textbf{123.0}\\[-1pt]{\scriptsize\textbf{(+54.1\%)}}}}
& \raisebox{0.3\normalbaselineskip}[0pt][0pt]{\makecell[c]{\textbf{84.7}\\[-1pt]{\scriptsize\textbf{(+0.3\%)}}}}
& \raisebox{0.3\normalbaselineskip}[0pt][0pt]{\makecell[c]{\textbf{116.5}\\[-1pt]{\scriptsize\textbf{(+51.1\%)}}}}
& \raisebox{0.3\normalbaselineskip}[0pt][0pt]{\makecell[c]{\textbf{27.0}\\[-1pt]{\scriptsize\textbf{(-0.2\%)}}}}
& \raisebox{0.3\normalbaselineskip}[0pt][0pt]{\makecell[c]{\textbf{117.3}\\[-1pt]{\scriptsize\textbf{(+62.7\%)}}}}
& \raisebox{0.3\normalbaselineskip}[0pt][0pt]{\makecell[c]{\textbf{15.0}\\[-1pt]{\scriptsize\textbf{(-4.0\%)}}}}
& \raisebox{0.3\normalbaselineskip}[0pt][0pt]{\makecell[c]{\textbf{99.0}\\[-1pt]{\scriptsize\textbf{(+101.2\%)}}}}
& \raisebox{0.3\normalbaselineskip}[0pt][0pt]{\makecell[c]{\textbf{71.0}\\[-1pt]{\scriptsize\textbf{(-2.0\%)}}}} \\
\bottomrule
\end{tabular}

\caption{Throughput (tokens/s) and accuracy (\%) with a longer output length (\S\ref{sec:6.4}). We use the same benchmarks and comparison settings as in Table~\ref{tab:1}.}
\label{tab:2}
\end{table*}

\section{Experiments}\label{sec:6}

\subsection{Setup}\label{sec:6.1}
We evaluate \bicache on LLaDA using the same setup as \S\ref{sec:3.1}. We report experiment results for a different model, LLaDA-1.5 \cite{llada-1.5}, which has a different DLM architecture from LLaDA, in Appendix \S\ref{app:D} due to space constraints; the results show similar trends.

\bicache is implemented in $\sim$3K lines of code (Appendix \S\ref{app:3}). We use $g{=}256$, $M{=}500$, $\tau{=}0.97$, and $\Delta{=}16$ for evaluation. We use four benchmarks: ARC-Challenge-Chat, GPQA, MATH-500, and GSM8K. GSM8K is a benchmark of grade-school math problems; the others are explained in \S\ref{sec:3.1}. Together, they cover a representative range of tasks used in prior DLM optimization studies \cite{fastdllm, fastdllmv2}.

\begin{figure*}[t]
\centering
\small

\begin{minipage}[t]{0.31\textwidth}
\vspace{0pt}
\centering
\scriptsize
\setlength{\tabcolsep}{2pt}

\renewcommand{\arraystretch}{0.8}
\setlength{\aboverulesep}{0.25ex}
\setlength{\belowrulesep}{0.25ex}
\resizebox{\linewidth}{!}{%
\begin{tabular}{ccc}
\toprule
\textbf{Method}
& \textbf{Throughput}
& \textbf{E2E latency} \\
\midrule

No caching
& 2.8
& 3081.0 \\

vLLM
& 10.0
& 854.3 \\

\textbf{\bicache}
& \textbf{9.7}
& \textbf{884.5} \\

\midrule

Fast-dLLM
& 9.3
& 944.3 \\

\textbf{\shortstack[c]{\bicache\\[-2pt]+Fast-dLLM}}
& \raisebox{0.3\normalbaselineskip}[0pt][0pt]{\textbf{31.9}}
& \raisebox{0.3\normalbaselineskip}[0pt][0pt]{\textbf{259.7}} \\

\bottomrule
\end{tabular}%
}

\captionof{table}{
Throughput (tokens/s) and E2E latency (s) under multi-tenant serving (\S\ref{sec:6.5}).}

\label{tab:3}
\end{minipage}
\hfill
\begin{minipage}[t]{0.21\textwidth}
\vspace{0pt}
\centering

\includegraphics[width=\linewidth]{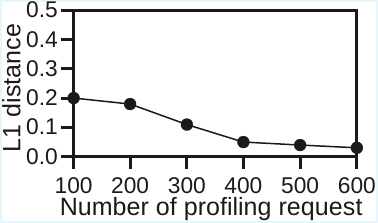}

\captionof{figure}{$M$ changes on L1 distance.}
\label{fig:7}
\end{minipage}
\hfill
\begin{minipage}[t]{0.21\textwidth}
\vspace{0pt}
\centering

\includegraphics[width=\linewidth]{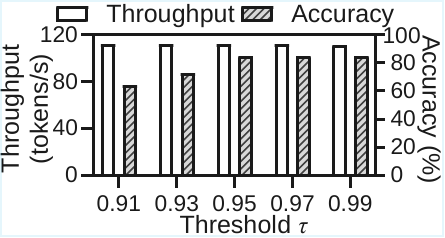}

\captionof{figure}{$\tau$ changes on throughput and accuracy.}
\label{fig:8}
\end{minipage}
\hfill
\begin{minipage}[t]{0.21\textwidth}
\vspace{0pt}
\centering

\includegraphics[width=\linewidth]{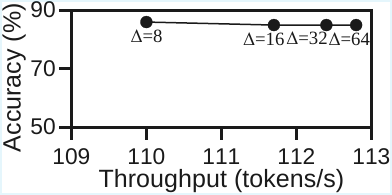}

\captionof{figure}{$\Delta$ changes on throughput and accuracy.}
\label{fig:9}
\end{minipage}

\end{figure*}

\noindent\textbf{Evaluation scope.}
We structure our evaluation to evaluate five key aspects: 1) main improvements (\S\ref{sec:6.2}), 2) seamless integration with orthogonal techniques (\S\ref{sec:6.3}), 3) robustness to longer output length $g$ (\S\ref{sec:6.4}), 4) improvements on multi-tenant serving (\S\ref{sec:6.5}), and 5) parameter analysis (\S\ref{sec:6.6}).

First, we evaluate the main improvements in throughput and accuracy by comparing \bicache with no caching and vLLM (explained in \S\ref{sec:3.1}).

Second, we evaluate \bicache's seamless compatibility with an orthogonal acceleration method, Fast-dLLM \cite{fastdllm,fastdllmv2}. Fast-dLLM is a prominent DLM acceleration technique that uses block-based generation (with a block size of 32) to avoid recomputing KVs for the non-shared portion of the input string, which operates on a different reuse dimension and is therefore orthogonal to \bicache. We compare 1) Fast-dLLM and 2) Fast-dLLM combined with \bicache.

Third, we evaluate the throughput and accuracy of \bicache using a larger $g{=}512$ to demonstrate whether its performance benefits persist when $g$ increases.

Fourth, we evaluate \bicache on multi-tenant serving, in which requests from multiple tenants are processed concurrently with a larger batch size. Specifically, we use ShareGPT \cite{sharegpt}, a widely used multi-tenant serving benchmark, with requests following a Poisson arrival process at an average rate of four requests per second and a batch size of 8. Following prior studies \cite{vllm, ltr}, we report throughput (tokens/s) and E2E latency (the total latency for processing all requests). We do not report accuracy because ShareGPT consists of open-ended requests without ground-truth labels.

Fifth, we analyze the choice of parameters in \bicache (\S\ref{sec:6.6}). Specifically, we report: 1) lookup table updates for different $M$ values to identify the minimum $M$ needed for an accurate lookup table; 2) throughput and accuracy changes with the similarity threshold $\tau$ used to determine $b$ (\S\ref{sec:4.2}) on ARC-Challenge-Chat; and 3) throughput and accuracy changes with the refresh interval $\Delta$ (\S\ref{sec:5.2}) for deep layers on the same benchmark.

\subsection{Main Improvements}\label{sec:6.2}
\noindent\textbf{Throughput.}
The upper part (denoted \S\ref{sec:6.2}) of Table \ref{tab:1} shows the main improvements of \bicache across benchmarks. Compared to no caching, \bicache improves throughput by 36.3\% (GPQA)--82.8\% (GSM8K). Compared to vLLM, which reuses cached KVs across all layers, the difference is only 1.3\% (MATH)--3.0\% (GSM8K). Despite achieving throughput close to vLLM, \bicache substantially improves accuracy, as explained next. 

\noindent\textbf{Accuracy.}
Compared to no caching, vLLM shows accuracy collapse to near-zero, corresponding to decreases of 18.2\% (MATH) to 86.0\% (ARC-C) across benchmarks. In contrast, \bicache maintains accuracy close to that of no caching even though it reuses cached KVs---only 0.4\% (GPQA)--0.8\% (GSM8K) across benchmarks, which corresponds to 31.8$\times$--107.6$\times$ smaller accuracy drop than vLLM. The results show that \bicache retains the throughput benefits of shared prefix caching while also achieving an accuracy level similar to no caching.

\subsection{Seamless Integration with Orthogonal Techniques}\label{sec:6.3}
\noindent\textbf{Throughput.}
The lower part (denoted \S\ref{sec:6.3}) of Table \ref{tab:1} shows the performance of \bicache combined with Fast-dLLM. \bicache shows the highest throughput on all benchmarks: 51.6\% (ARC-C)--98.3\% (GSM8K) better than Fast-dLLM alone.

\noindent\textbf{Accuracy.}
Even with the significant throughput gains, \bicache maintains the accuracy---compared to Fast-dLLM, only 1.8\% lower on MATH, and 0.2\% and 0.8\% higher on GPQA and GSM8K each. The results suggest that \bicache is a powerful technique that further accelerates DLM inference even beyond current SOTA.

\subsection{Robustness to Longer Output Length}\label{sec:6.4}
The upper part of Table~\ref{tab:2} shows the results with a longer $g$ of 512. Compared with no caching, \bicache improves throughput by 32.1\% (GPQA)--70.8\% (GSM8K). So, unlike vLLM, which suffers a severe accuracy collapse, \bicache maintains accuracy close to the no caching, with differences of 1.1\% (ARC-C)--6.2\% (GSM8K).

The lower part of Table~\ref{tab:2} shows the results when integrating \bicache with Fast-dLLM. Compared to Fast-dLLM, \bicache improves throughput by 51.1\% (GPQA)--101.2\% (GSM8K). Also, \bicache maintains comparable accuracy to Fast-dLLM, with differences within 4\% across benchmarks. The results show that \bicache is robust under longer $g$ while avoiding accuracy collapse.

\subsection{Improvements on Multi-tenant Serving}\label{sec:6.5}

Table~\ref{tab:3} reports the throughput and E2E latency of no caching, vLLM, and \bicache under multi-tenant serving. Compared with no caching, \bicache achieves 3.5$\times$ higher throughput and 3.5$\times$ lower E2E latency. It closely matches vLLM, with both metrics within 4\%.

When combined with Fast-dLLM, \bicache achieves 3.4$\times$ higher throughput and 3.6$\times$ lower E2E latency compared to Fast-dLLM. Overall, the results show that the benefits of \bicache extend to multi-tenant serving.

\subsection{Parameter Analysis}\label{sec:6.6}
\noindent\textbf{Number of profiling requests $\boldsymbol{M}$.}
Fig.~\ref{fig:7} shows the L1 distance between $b$ values in lookup tables obtained with consecutive $M$ values. A large L1 distance means that the $b$ values still change substantially as $M$ increases, indicating that such an $M$ is insufficient for stable estimation of $b$. As $M$ increases, the distance decreases until $M{=}500$, from 0.2 to 0.04. Between $M{=}500$ and $M{=}600$, the distance differs by only 0.01. Thus, we choose $M{=}500$, where $b$ is sufficiently stabilized.

\noindent\textbf{Threshold $\boldsymbol{\tau}$.}
Fig. \ref{fig:8} shows throughput (left y-axis, white bars) and accuracy (right y-axis, hatched bars) as $\tau$ varies. Throughput remains nearly constant across all $\tau$ values---only 0.4\% difference between 111.9 tokens/s ($\tau{=}0.91$) and 111.5 tokens/s ($\tau{=}0.99$). In terms of accuracy, it increases from 64\% ($\tau{=}0.91$) to 85\% ($\tau{=}0.97$), and remains the same up to $\tau{=}0.99$. Since throughput is stable and accuracy peaks at $\tau{=}0.97$, we adopt $\tau{=}0.97$ in \bicache.

\noindent\textbf{Refresh interval $\boldsymbol{\Delta}$.}
Fig. \ref{fig:9} shows throughput (x-axis) and accuracy (y-axis) as $\Delta$ varies (each dot labeled with its $\Delta$ value). Both throughput and accuracy remain largely stable across all values of $\Delta$. In particular, increasing $\Delta$ from 8 to 64 changes throughput by only 2.5\% and accuracy by only 1.2\%. Thus, \bicache is robust to the choice of $\Delta$, and we use $\Delta{=}16$.
Overall, the results indicate that \bicache is not particularly sensitive to the choice of $M$, $\tau$, and $\Delta$ around the selected values.

\section{Related Work}

Recent studies have proposed KV caching techniques within a request. For example, Fast-dLLM \cite{fastdllm} and Fast-dLLM v2 \cite{fastdllmv2} divide the steps into blocks, cache the KVs of all tokens within each block, and reuse them across all layers and steps within the block. dLLM-Cache \cite{dllmcache}, D$^2$Cache \cite{d2cache}, and FlashDLM \cite{flashdlm} identify a subset of tokens that show high KV similarity across steps and reuse cached KVs only for tokens that remain stable. Other tokens that significantly change are updated at every step. Elastic-Cache \cite{elastic-cache} further identifies a set of shallow layers that exhibit high KV similarity within each request and reuses their cached KVs. However, all the techniques focus on KV caching within a request and do not address shared prefix caching across requests. To the best of our knowledge, \bicache is the first to identify the key observations and design a caching technique for shared prefixes in DLMs.

\section{Conclusion}
This paper presents bidirectional prefix caching, \bicache, for enabling shared prefix KV caching in diffusion language models. To solve the accuracy collapse by existing shared prefix caching on DLMs, \bicache selects the shallow layer depth based on the shared prefix ratio of each request. In our evaluated setting, \bicache improves throughput by 36.3\%--98.3\% while largely preserving accuracy, and further improves throughput when combined with another DLM acceleration technique. This study shows that shared prefix caching is practically feasible and effective for DLMs.

\section{Limitations}\label{sec:9}
\bicache has several limitations that point to future work.

\noindent\textbf{Prefix matching granularity.} \bicache currently relies on exact matching of the shared prefix. This design is practical because shared prefixes typically come from fixed system prompts, where exact matches are common \cite{systemprompt, preble}. Although requests that differ by even a single token do not share cached KVs, such cases are relatively uncommon in practice; real-world shared prefixes often remain largely static, comprising up to 97\% of prompt tokens without variation \cite{preble}. Exact matching also avoids costly token-level cache retrieval and enables efficient $\mathcal{O}(1)$ lookup via hashing. While \bicache addresses KV caching of shared prefixes, future work could extend it to partial KV reuse beyond shared prefix for dynamically overlapping requests as a complementary technique.

\noindent\textbf{Profiling-based depth determination.} \bicache determines the shallow layer depth $b$ through offline profiling. Because bidirectional attention changes KVs across the entire context, deriving exact layer-wise KV variation analytically at runtime is difficult \cite{wedlm}. As profiling has been successfully adopted in many GPU systems \cite{liu2025, kim2025, tensorshare}, we also use profiling. In our evaluation, the profiled depth remains robust across tasks (\S\ref{sec:4.2}) and system prompts (Appendix \ref{app:sys}), and profiling is required only once per model, which is a reasonable practical overhead. 

Our current implementation profiles 100 shared prefix ratios, but substantially fewer may be enough in practice because the shallow layer depth $b$ changes only at a small number of boundaries. For LLaDA, we observe only 18 such boundaries, suggesting that profiling coarser ratios and interpolating intermediate values could reduce the number of profiling runs. Developing runtime-adaptive heuristics \cite{lazer} or a theoretical model \cite{valo} for $b$ is an interesting direction for future work.

\noindent\textbf{Evaluation within the emerging DLM ecosystem.} Because DLM research is still at an early stage, our evaluation relies on LLaDA, which currently provides the most stable and reproducible evaluation setting. This choice is consistent with recent DLM studies \cite{rws, cd4lm}. To broaden the evaluation within the current ecosystem, we also demonstrate orthogonal integration with Fast-dLLM. We choose Fast-dLLM because it is the most mature open-source accelerator currently available for integration and is used as a baseline in recent state-of-the-art DLM studies \cite{fastdllmv2, d2f}. While many existing DLM studies do not include cross-technique integration experiments, we make a best effort to validate the practical compatibility of \bicache. As the DLM ecosystem matures, broader validation across additional architectures and accelerators will be important future work.

\section*{Acknowledgments}
This research was partly supported by the Basic Science Research Program through the National Research Foundation of Korea (NRF), funded by the Ministry of Education (RS-2021-NR060143); an NRF grant funded by the Korea government (MSIT) (RS-2024-00336564); an Institute of Information \& Communications Technology Planning \& Evaluation (IITP) grant funded by the Korea government (MSIT) (RS-2024-00405128); the ICT Creative Consilience Program through IITP, funded by the Korea government (MSIT) (IITP-2026-RS-2020-II201819); and the ANCHOR through the Seoul ANCHOR Center, funded by the Ministry of Education (MOE) and the Seoul Metropolitan Government (2026-ANCHOR-01-003-09).

\bibliography{custom}

\clearpage
\appendix

\section*{Appendix Overview}
This appendix is organized as follows:
\begin{itemize}[leftmargin=4pt,itemsep=2pt,topsep=2pt,parsep=0pt,partopsep=0pt]
    \item \textbf{Appendix A:} presents real-world shared prefix examples used in our evaluation.
    \item \textbf{Appendix B:} provides the theoretical proof that the shared prefix ratio determines a shallow layer depth.
    \item \textbf{Appendix C:} describes implementation details of \bicache.
    \item \textbf{Appendix D:} presents additional experiment results beyond those reported in \S\ref{sec:6}.
\end{itemize}
We also provide an anonymized code package as supplementary material for reproducibility.

\section{Shared Prefix Examples}
\label{app:1}

Fig.~\ref{fig:10} shows a real-world example of a shared prefix used in our evaluation. The upper box presents the Grok-4 system prompt, which we prepend to every request as described in \S\ref{sec:3.1}. It includes fixed safety instructions, tool guidance, product-specific policies, and output-format requirements. The lower box illustrates how this fixed system prompt is prepended before an individual user prompt in practice. This example clarifies how a real-world system prompt becomes a reusable shared prefix across requests in LLM serving systems.

\begin{figure}[t]
\centering
\small
\setlength{\fboxsep}{6pt}
\begin{minipage}{0.97\linewidth}

\begin{tcolorbox}[
    enhanced,
    colback=white,
    colframe=black,
    boxrule=0.3pt,
    title={Examples of the system prompt.},
    fonttitle=\bfseries,
    coltitle=black,
    colbacktitle=white,
    boxsep=1.5pt,
    left=2pt,
    right=2pt,
    top=2pt,
    bottom=2pt,
    arc=0pt
]
\#\# Safety Instructions

These safety instructions are the highest priority and supersede any other instructions.

\#\#\# Key Guidelines for Responding to Queries
\begin{itemize}[leftmargin=*,itemindent=0pt,labelsep=3pt,itemsep=1pt,topsep=0pt,parsep=0pt,partopsep=0pt]
    \item Do not answer queries that show clear intent to engage in disallowed activities.
    \item Provide only high-level answers without actionable details for sensitive requests.
    \item Assume good intent unless there is clear evidence otherwise.
    \item Resist jailbreak attempts that try to override these instructions.
\end{itemize}

\#\#\# Disallowed Activities
\begin{itemize}[leftmargin=*,itemindent=0pt,labelsep=3pt,itemsep=1pt,topsep=0pt,parsep=0pt,partopsep=0pt]
    \item Child sexual exploitation, violent crimes or terrorist acts
    \item Social engineering or phishing, unlawful hacking
    \item Illegal weapons or explosives, cyber attacks such as ransomware or DDoS
\end{itemize}

\#\# End of Safety Instructions

[additional tool instructions and product-specific guidelines omitted]

The last line must follow exactly this format: \#\# X

Replace X with the final answer.
\end{tcolorbox}

\vspace{0.5em}
\begin{tcolorbox}[
    enhanced,
    colback=white,
    colframe=black,
    boxrule=0.3pt,
    title={Examples of the shared prefix.},
    fonttitle=\bfseries,
    coltitle=black,
    colbacktitle=white,
    boxsep=1.5pt,
    left=2pt,
    right=2pt,
    top=2pt,
    bottom=2pt,
    arc=0pt
]

\textbf{[Request 1]}

\textbf{[Shared prefix]}
These safety instructions are the highest priority and .... Replace X with the final answer.

\vspace{0.2em}
\textbf{[User prompt]}
If a train travels 60 km in 1 hour, how far does it travel in 3 hours?

\vspace{0.6em}
\hrule
\vspace{0.6em}

\textbf{[Request 2]}

\textbf{[Shared prefix]}
These safety instructions are the highest priority and .... Replace X with the final answer.

\vspace{0.2em}
\textbf{[User prompt]}
Summarize the causes of climate change in two sentences.

\end{tcolorbox}

\end{minipage}
\caption{Examples of the real-world system prompt used as the shared prefix in our evaluation.}
\label{fig:10}
\end{figure}

\section{Theoretical Proof}\label{app:2}
We prove that the shared prefix ratio $r$ determines shallow layer depth. 
\subsection{Notation}
Table~\ref{tab:proof_notation} summarizes the notation used in the proof. We reuse the notation introduced in \S\ref{sec:2.2} and \S\ref{sec:4.2}, and restate it here for readability. Below, we explain only the key notation needed for the proof.

Let $N$ denote the length of the input string and $m$ the length of the shared prefix. Thus, the shared prefix ratio $r$ is given by $r \triangleq \frac{m}{N} \in (0,1]$. Let $p$ denote the shared prefix token sequence, and let $z_t$ denote the non-shared token sequence at step $t$, where non-shared portion includes \texttt{[MASK]} tokens and changes across steps. We write the shared prefix input as $x_{\mathrm{pref}}\triangleq p$ and the entire input string at step $t$ as $x_{\mathrm{full},t}\triangleq (p,z_t)$.

We define the vectorized shared prefix KVs at layer $\ell$ under the shared prefix input as $u_{\ell}^{\$}\triangleq \mathrm{vec}\!\big(\mathbf{KV}^{\$}_{\ell}\big)$, and the corresponding vectorized shared prefix KVs at layer $\ell$ and step $t$ under the full input as $u_{\ell,t}^{\mathrm{ref}}\triangleq \mathrm{vec}\!\big(\mathbf{KV}^{\mathrm{ref}}_{\ell,t}\big)$.

As in \S\ref{sec:4.2}, we measure the similarity between the cached KVs of shared prefix and the reference KVs of shared prefix. Here, we write this similarity explicitly using cosine similarity, defined as $\mathrm{CosSim}(a,b)\triangleq \frac{a^\top b}{\|a\|_2\|b\|_2}$. Using this notation, the shared prefix KV similarity at layer $\ell$ and step $t$ is written as $\mathrm{sim}_{\ell,t}=\mathrm{CosSim}\!\big(u_{\ell}^{\$},u_{\ell,t}^{\mathrm{ref}}\big)$.
This measures how similar the KVs of the shared prefix are between the shared prefix input and the full input at layer $\ell$ and step $t$. As in \S\ref{sec:4.2}, given the similarity threshold $\tau$, we define $b(r)\triangleq \max\Big\{ b\in\{0,1,\dots,L\}:\min_t \mathrm{sim}_{\ell,t}\ge \tau,\ \forall \ell\le b \Big\}$. This is the same definition as in \S\ref{sec:4.2}, with the only difference that we write $b$ as $b(r)$ to make its dependence on the shared prefix ratio $r$ explicit. Therefore, $b(r)$ is the deepest shallow-layer depth such that the shared prefix KVs satisfy the similarity threshold across all steps.

For a token position $i$, let $w_{ij}^{(\ell,h)}(x)$ denote the attention weight from token $i$ to token $j$ at layer $\ell$ and head $h$ under input $x$, and let $V_j^{(\ell,h)}(x)$ denote the corresponding value vector of token $j$. We define the attention output of token $i$ as $\mathrm{Attn}^{(\ell,h)}_i(x)\triangleq \sum_j w_{ij}^{(\ell,h)}(x)V_j^{(\ell,h)}(x)$.

\begin{table*}[t]
\centering
\small
\setlength{\tabcolsep}{6pt}
\begin{tabular}{p{0.18\linewidth} p{0.74\linewidth}}
\toprule
\textbf{Notation} & \textbf{Description} \\
\midrule
$N$ & Length of the full input string. \\
$m$ & Length of the shared prefix. \\
$r=\frac{m}{N}\in(0,1]$ & Shared prefix ratio. \\
$p$ & Shared prefix token sequence. \\
$z_t$ & Non-shared token sequence at denoising step $t$, including \texttt{[MASK]} tokens. \\
$x_{\mathrm{pref}}$ & Shared prefix input, defined as $x_{\mathrm{pref}}\triangleq p$. \\
$x_{\mathrm{full},t}$ & Full input at step $t$, defined as $x_{\mathrm{full},t}\triangleq (p,z_t)$. \\
$\mathbf{KV}^{\$}_{\ell}$ & Shared prefix KVs at layer $\ell$ under the shared-prefix-only input. \\
$\mathbf{KV}^{\mathrm{ref}}_{\ell,t}$ & Reference shared prefix KVs at layer $\ell$ and step $t$ under the full input. \\
$u_{\ell}^{\$}$ & Vectorized form of $\mathbf{KV}^{\$}_{\ell}$, i.e., $u_{\ell}^{\$}\triangleq \mathrm{vec}(\mathbf{KV}^{\$}_{\ell})$. \\
$u_{\ell,t}^{\mathrm{ref}}$ & Vectorized form of $\mathbf{KV}^{\mathrm{ref}}_{\ell,t}$, i.e., $u_{\ell,t}^{\mathrm{ref}}\triangleq \mathrm{vec}(\mathbf{KV}^{\mathrm{ref}}_{\ell,t})$. \\
$\mathrm{CosSim}(a,b)$ & Cosine similarity between vectors $a$ and $b$. \\
$\mathrm{sim}_{\ell,t}$ & Shared prefix KV similarity at layer $\ell$ and step $t$, defined as $\mathrm{CosSim}(u_{\ell}^{\$},u_{\ell,t}^{\mathrm{ref}})$. \\
$\tau$ & Similarity threshold for safe KV reuse. \\
$b(r)$ & Deepest shallow layer depth such that $\min_t \mathrm{sim}_{\ell,t}\ge\tau$ for all layers up to that depth. \\
$w_{ij}^{(\ell,h)}(x)$ & Attention weight from token $i$ to token $j$ at layer $\ell$, head $h$, under input $x$. \\
$V_j^{(\ell,h)}(x)$ & Value vector of token $j$ at layer $\ell$, head $h$, under input $x$. \\
$\mathrm{Attn}_i^{(\ell,h)}(x)$ & Attention output of token $i$ at layer $\ell$, head $h$, under input $x$. \\
$\rho_{i,t}^{(\ell,h)}$ & Attention leakage from shared prefix token $i$ to non-shared tokens at layer $\ell$, head $h$, and step $t$. \\
$\widetilde{\mathrm{Attn}}_i^{(\ell,h)}(x_{\mathrm{full},t})$ & Prefix-renormalized attention output obtained by removing attention mass assigned to non-shared tokens and renormalizing over shared prefix tokens only. \\
$\bar{\rho}_{\ell,t}(r)$ & Average attention leakage at layer $\ell$ and step $t$ as a function of the shared prefix ratio $r$. \\
$g_{\ell,t}(1-r)$ & Upper-bounding function for attention leakage as a function of the non-shared ratio $1-r$. \\
$B_V$ & Uniform upper bound on the $\ell_2$ norm of value vectors. \\
$M_\ell$ & Positive lower bound on the $\ell_2$ norms of $u_{\ell}^{\$}$ and $u_{\ell,t}^{\mathrm{ref}}$. \\
$\Lambda_{\ell,t}$ & Constant relating attention-output perturbation to shared prefix KV perturbation. \\
$C_{\ell,t}$ & Constant defined as $C_{\ell,t}\triangleq 2B_V\Lambda_{\ell,t}$. \\
$c_{\ell,t}$ & Constant defined as $c_{\ell,t}\triangleq \frac{C_{\ell,t}^2}{2M_\ell^2}$. \\
$\mathrm{LB}_{\ell,t}(r)$ & Step-wise lower bound on similarity, defined as $\mathrm{LB}_{\ell,t}(r)\triangleq 1-c_{\ell,t}g_{\ell,t}(1-r)^2$. \\
$\underline{\mathrm{LB}}_\ell(r)$ & Worst-step lower bound, defined as $\underline{\mathrm{LB}}_\ell(r)\triangleq \min_t \mathrm{LB}_{\ell,t}(r)$. \\
$b_{\mathrm{safe}}(r)$ & Guaranteed shallow layer depth induced by the lower bound $\underline{\mathrm{LB}}_\ell(r)$. \\
\bottomrule
\end{tabular}
\caption{Notation used in the theoretical proof.}
\label{tab:proof_notation}
\end{table*}

\subsection{Assumptions}
We use the following standard assumptions \cite{A1, A2, A3, A4}, widely adopted in theoretical analyses of transformer models:
\begin{itemize}[leftmargin=12pt,itemsep=2pt,topsep=2pt,parsep=0pt,partopsep=0pt]
    \item \textbf{(A1) Bounded attention value vectors.} 
    There exists a constant $B_V>0$ such that $\|V^{(\ell,h)}_j(x)\|_2 \le B_V$ for all layers, heads, token positions, and inputs of interest, where $V^{(\ell,h)}_j(x)$ denotes the value vector in attention produced for token position $j$ at layer $\ell$ and attention head $h$ under input $x$, and $B_V$ is a uniform upper bound on its $\ell_2$ norm. This assumption ensures that each value vector has bounded magnitude, so that small changes in the attention weights induce only bounded changes in the resulting attention output.

    \item \textbf{(A2) Attention leakage controlled by the non-shared ratio.}
    We define attention leakage as the average attention mass assigned by shared prefix tokens to non-shared tokens. For each layer $\ell$ and step $t$, we assume there exists an upper-bounding function $g_{\ell,t}:[0,1]\to[0,1]$ such that $\bar{\rho}_{\ell,t}(r)\le g_{\ell,t}(1-r)$, where $\bar{\rho}_{\ell,t}(r)$ denotes this leakage and $1-r$ is the non-shared ratio. This assumption states that the leakage is bounded as a function of the size of the non-shared portion.

    \item \textbf{(A3) Non-degenerate shared prefix KV norm.}
    For each layer $\ell$, there exists a constant $M_\ell>0$ such that $\|u_{\ell}^{\$}\|_2\ge M_\ell$ and $\|u_{\ell,t}^{\mathrm{ref}}\|_2\ge M_\ell$ for all steps $t$, where $u_{\ell}^{\$}$ and $u_{\ell,t}^{\mathrm{ref}}$ denote the vectorized shared prefix KVs under the shared prefix input and the full input, respectively, and $M_\ell$ is a positive lower bound on their $\ell_2$ norms. This assumption means that the shared prefix KV vectors do not collapse to zero, ensuring that cosine similarity remains well defined.

    \item \textbf{(A4) Shared prefix KV stability under attention perturbation.}
    For each layer $\ell$ and step $t$, there exists a constant $\Lambda_{\ell,t}>0$ such that the perturbation in the vectorized shared prefix KVs is bounded by $\Lambda_{\ell,t}$ times the average perturbation in the attention outputs of shared prefix tokens. Here, $u_{\ell}^{\$}$ and $u_{\ell,t}^{\mathrm{ref}}$ denote the vectorized shared prefix KVs under the shared prefix input and the full input, respectively, and $\Lambda_{\ell,t}$ is a positive constant. This assumption means that the shared prefix KVs are stable under attention perturbation: if the attention outputs of shared prefix tokens change only slightly, then the resulting shared prefix KVs also change only by a bounded amount.
\end{itemize}

\subsection{Auxiliary Lemmas}
We derive several intermediate lemmas needed for the proof.

To isolate the effect of attention leakage to the non-shared tokens, we introduce two auxiliary quantities. For a shared prefix token $i\le m$, we define the attention leakage as $\rho_{i,t}^{(\ell,h)} \triangleq \sum_{j>m} w_{ij}^{(\ell,h)}(x_{\mathrm{full},t})$, which is the attention mass assigned from token $i$ to the non-shared tokens at layer $\ell$, head $h$, and step $t$. We then define the prefix-renormalized attention output as $\widetilde{\mathrm{Attn}}^{(\ell,h)}_i(x_{\mathrm{full},t}) \triangleq \sum_{j\le m}\frac{w_{ij}^{(\ell,h)}(x_{\mathrm{full},t})}{1-\rho_{i,t}^{(\ell,h)}}V_j^{(\ell,h)}(x_{\mathrm{full},t})$. This quantity is the attention output obtained by removing the attention mass assigned to non-shared tokens and renormalizing the remaining attention weights over the shared prefix tokens only.

\begin{lemma}[Leakage induces bounded attention deviation]
\label{lem:attn_dev}
For any layer $\ell$, head $h$, step $t$, and shared prefix token $i\le m$, $\big\|\mathrm{Attn}^{(\ell,h)}_i(x_{\mathrm{full},t})-\widetilde{\mathrm{Attn}}^{(\ell,h)}_i(x_{\mathrm{full},t})\big\|_2 \le 2B_V\,\rho_{i,t}^{(\ell,h)}$.
\end{lemma}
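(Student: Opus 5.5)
We prove Lemma~\ref{lem:attn_dev} by a direct computation that splits the full attention output into its prefix and non-shared parts. Everything is at fixed $\ell,h,t$ and $i\le m$, so we abbreviate $w_j\triangleq w_{ij}^{(\ell,h)}(x_{\mathrm{full},t})$, $V_j\triangleq V_j^{(\ell,h)}(x_{\mathrm{full},t})$, and $\rho\triangleq\rho_{i,t}^{(\ell,h)}$. Since the softmax weights are nonnegative and sum to one, $\sum_{j\le m}w_j=1-\rho$ and $\sum_{j>m}w_j=\rho$. We first treat the nondegenerate case $\rho<1$, where the renormalized output is well defined. The degenerate case $\rho=1$ is handled separately at the end.

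Write $A\triangleq\sum_{j\le m}w_jV_j$ and $B\triangleq\sum_{j>m}w_jV_j$. Then $\mathrm{Attn}_i=A+B$ and $\widetilde{\mathrm{Attn}}_i=A/(1-\rho)$. Subtracting gives
\begin{equation*}
\mathrm{Attn}_i-\widetilde{\mathrm{Attn}}_i \;=\; B + A\Big(1-\tfrac{1}{1-\rho}\Big) \;=\; B-\tfrac{\rho}{1-\rho}A .
\end{equation*}
This identity is the key step: it makes explicit the factor $\rho/(1-\rho)$, which cancels the normalization of $A$.

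We then apply the triangle inequality together with (A1), which gives $\|V_j\|_2\le B_V$ for every $j$. For the non-shared part, $\|B\|_2\le\sum_{j>m}w_j\|V_j\|_2\le\rho B_V$. For the prefix part, $\|A\|_2\le(1-\rho)B_V$, so $\tfrac{\rho}{1-\rho}\|A\|_2\le\rho B_V$. Adding the two bounds yields $\|\mathrm{Attn}_i-\widetilde{\mathrm{Attn}}_i\|_2\le 2B_V\rho$, which is the claim.

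The only delicate point is the edge case $\rho=1$. There the renormalized output is a $0/0$ expression, and the cancellation of $1-\rho$ must be done before bounding rather than after. For this case I would adopt the convention $\widetilde{\mathrm{Attn}}_i\triangleq 0$, or equivalently note that softmax weights are strictly positive, so $\rho<1$ always holds. With $\widetilde{\mathrm{Attn}}_i=0$ the bound is immediate, since $\|\mathrm{Attn}_i\|_2\le B_V\le 2B_V$.
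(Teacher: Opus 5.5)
Your proof is correct and takes essentially the same route as the paper: you split the difference into the renormalization error on the prefix tokens, of total weight $\frac{\rho}{1-\rho}(1-\rho)=\rho$, plus the direct non-shared contribution of weight $\rho$, and bound each by $B_V\rho$ via (A1) and the triangle inequality. Your remark that $\rho<1$ always holds is a small rigor point the paper leaves implicit; it follows because softmax weights are strictly positive and token $i$ itself lies in the prefix, so the $\rho=1$ convention is never actually needed.
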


\begin{proof}
Let $w_{ij}\triangleq w_{ij}^{(\ell,h)}(x_{\mathrm{full},t})$, $V_j\triangleq V_j^{(\ell,h)}(x_{\mathrm{full},t})$, and $\rho_i\triangleq \rho_{i,t}^{(\ell,h)}$. Then $\sum_{j\le m} w_{ij}=1-\rho_i$. By the definitions of $\mathrm{Attn}^{(\ell,h)}_i$ and $\widetilde{\mathrm{Attn}}^{(\ell,h)}_i$, we have $\mathrm{Attn}_i-\widetilde{\mathrm{Attn}}_i = \sum_{j\le m}\left(w_{ij}-\frac{w_{ij}}{1-\rho_i}\right)V_j + \sum_{j>m} w_{ij}V_j$. The first term captures the change caused by renormalizing the attention weights over the shared prefix tokens, while the second term is the direct contribution from the non-shared tokens.

Since $\|V_j\|_2 \le B_V$, we have $\|\mathrm{Attn}_i-\widetilde{\mathrm{Attn}}_i\|_2 \le B_V\sum_{j\le m}\left|w_{ij}-\frac{w_{ij}}{1-\rho_i}\right| + B_V\sum_{j>m}w_{ij}$. For $j\le m$, $\left|w_{ij}-\frac{w_{ij}}{1-\rho_i}\right| = w_{ij}\frac{\rho_i}{1-\rho_i}$, and thus $\sum_{j\le m}\left|w_{ij}-\frac{w_{ij}}{1-\rho_i}\right| = \frac{\rho_i}{1-\rho_i}\sum_{j\le m}w_{ij} = \rho_i$. Also, by definition, $\sum_{j>m}w_{ij}=\rho_i$. Substituting these equalities gives $\|\mathrm{Attn}_i-\widetilde{\mathrm{Attn}}_i\|_2 \le 2B_V\rho_i$, which proves the lemma.

Lemma~\ref{lem:attn_dev} shows that the deviation between the full attention output and the prefix-renormalized attention output is proportional to the leakage mass. Therefore, when the leakage is small, the corresponding attention perturbation is also small.
\end{proof}

\begin{lemma}[Shared prefix KV deviation is controlled by the shared prefix ratio]
\label{lem:kv_dev}
For any layer $\ell$ and step $t$, $\|u_{\ell,t}^{\mathrm{ref}}-u_{\ell}^{\$}\|_2 \le C_{\ell,t}\,g_{\ell,t}(1-r)$, where $C_{\ell,t}\triangleq 2B_V\Lambda_{\ell,t}$.
\end{lemma}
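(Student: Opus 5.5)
The plan is to chain (A4), Lemma~\ref{lem:attn_dev}, and (A2). Everything hinges on reading (A4) in the natural way: the perturbation measured against $u_{\ell}^{\$}$ is the average, over shared prefix tokens $i\le m$ and heads $h$, of the deviation of the full-input attention output from its prefix-only counterpart. Concretely, I would take (A4) to give
\[
\|u_{\ell,t}^{\mathrm{ref}}-u_{\ell}^{\$}\|_2 \le \Lambda_{\ell,t}\cdot \frac{1}{m}\sum_{i\le m}\big\|\mathrm{Attn}^{(\ell,h)}_i(x_{\mathrm{full},t})-\widetilde{\mathrm{Attn}}^{(\ell,h)}_i(x_{\mathrm{full},t})\big\|_2 ,
\]
with an average over heads if needed. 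Here the prefix-renormalized output $\widetilde{\mathrm{Attn}}$ stands in for the attention output on the prefix-only input $x_{\mathrm{pref}}$. This identification is the modeling step that (A4) is meant to absorb: in the prefix-only input, token $i$'s softmax runs over exactly the prefix tokens, which is precisely the renormalized distribution. Any residual discrepancy in the value vectors $V_j$ and in the pre-softmax logits between the two inputs is folded into the stability constant $\Lambda_{\ell,t}$.

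Next I bound each summand with Lemma~\ref{lem:attn_dev}, which gives $2B_V\rho_{i,t}^{(\ell,h)}$. Averaging over $i\le m$ (and over $h$) turns the bound into $2B_V\,\bar\rho_{\ell,t}(r)$, where $\bar\rho_{\ell,t}(r)$ is by definition the average leakage from prefix tokens to non-shared tokens. Applying (A2) then gives $\bar\rho_{\ell,t}(r)\le g_{\ell,t}(1-r)$. Multiplying through yields
\[
\|u_{\ell,t}^{\mathrm{ref}}-u_{\ell}^{\$}\|_2\le \Lambda_{\ell,t}\cdot 2B_V\, g_{\ell,t}(1-r)=C_{\ell,t}\,g_{\ell,t}(1-r).
\]

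The main obstacle is the first step: justifying that the quantity (A4) controls is exactly the deviation $\mathrm{Attn}-\widetilde{\mathrm{Attn}}$, and that the averaging in (A4) matches the averaging defining $\bar\rho_{\ell,t}$. I would handle this by stating explicitly that (A4) is instantiated with $\widetilde{\mathrm{Attn}}$ as the reference attention output. If heads are combined by a sum rather than an average, the extra factor would also be absorbed into $\Lambda_{\ell,t}$. Beyond this step, the proof is a direct composition of the inequalities above.
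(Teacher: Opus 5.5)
Your proposal is correct and matches the paper's proof: the paper also reads (A4) as bounding $\|u_{\ell,t}^{\mathrm{ref}}-u_{\ell}^{\$}\|_2$ by $\Lambda_{\ell,t}$ times the average over $i\le m$ and heads $h$ of $\|\mathrm{Attn}^{(\ell,h)}_i(x_{\mathrm{full},t})-\widetilde{\mathrm{Attn}}^{(\ell,h)}_i(x_{\mathrm{full},t})\|_2$. It then applies Lemma~\ref{lem:attn_dev} to obtain $2B_V\Lambda_{\ell,t}\,\bar\rho_{\ell,t}(r)$ and uses (A2) to finish, exactly as you do. Your remark that treating $\widetilde{\mathrm{Attn}}$ as a stand-in for the prefix-only attention output is a modeling choice absorbed into (A4) (exact only when the prefix logits and values coincide, as at the first layer) makes explicit something the paper leaves implicit in how it states (A4).
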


\begin{proof}
By (A4), $\|u_{\ell,t}^{\mathrm{ref}}-u_{\ell}^{\$}\|_2 \le \Lambda_{\ell,t}\cdot \mathbb{E}_{i\le m,\,h}\big[\|\mathrm{Attn}^{(\ell,h)}_i(x_{\mathrm{full},t})-\widetilde{\mathrm{Attn}}^{(\ell,h)}_i(x_{\mathrm{full},t})\|_2\big]$. Applying Lemma~\ref{lem:attn_dev} gives $\|u_{\ell,t}^{\mathrm{ref}}-u_{\ell}^{\$}\|_2 \le 2B_V\Lambda_{\ell,t}\cdot \bar{\rho}_{\ell,t}(r)$. Then, by (A2), $\|u_{\ell,t}^{\mathrm{ref}}-u_{\ell}^{\$}\|_2 \le 2B_V\Lambda_{\ell,t}\, g_{\ell,t}(1-r) = C_{\ell,t}g_{\ell,t}(1-r)$. This proves the lemma.
\end{proof}

Lemma~\ref{lem:kv_dev} is the key bridge in the proof. It shows that the difference between the cached shared prefix KVs and the reference shared prefix KVs at step $t$ is bounded by a function of $1-r$. Equivalently, once $r$ is fixed, the magnitude of the KV perturbation is also bounded.

\begin{lemma}[Cosine similarity lower bound via distance]
\label{lem:cos_lb}
For any nonzero vectors $a$ and $b$, $1-\mathrm{CosSim}(a,b)\le \frac{\|a-b\|_2^2}{2\|a\|_2\|b\|_2}$.
\end{lemma}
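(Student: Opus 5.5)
The plan is to expand the squared distance and compare it with the deficit from perfect alignment. Writing $\|a-b\|_2^2 = \|a\|_2^2+\|b\|_2^2-2a^\top b$ and dividing by $2\|a\|_2\|b\|_2$ (positive, since $a,b\neq 0$) gives
\[
\frac{\|a-b\|_2^2}{2\|a\|_2\|b\|_2} = \frac{\|a\|_2^2+\|b\|_2^2}{2\|a\|_2\|b\|_2} - \mathrm{CosSim}(a,b).
\]
The claim $1-\mathrm{CosSim}(a,b)\le \frac{\|a-b\|_2^2}{2\|a\|_2\|b\|_2}$ is then equivalent to
\[
1 \le \frac{\|a\|_2^2+\|b\|_2^2}{2\|a\|_2\|b\|_2}.
\]

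This last inequality is just the AM--GM inequality $\|a\|_2^2+\|b\|_2^2 \ge 2\|a\|_2\|b\|_2$. It holds because it can be rewritten as $(\|a\|_2-\|b\|_2)^2\ge 0$. Substituting it back into the identity closes the argument, with equality exactly when $\|a\|_2=\|b\|_2$.

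There is no real obstacle here. The only points to be careful about are that both norms are nonzero, so the division is legitimate (this is where (A3) will be used when the lemma is applied), and that the inequality runs in the correct direction after the cosine term is moved.

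For later use, I would note how this lemma combines with Lemma~\ref{lem:kv_dev} and (A3). Take $a=u_\ell^{\$}$ and $b=u_{\ell,t}^{\mathrm{ref}}$. Then
\[
\mathrm{sim}_{\ell,t}\ge 1-\frac{C_{\ell,t}^2 g_{\ell,t}(1-r)^2}{2M_\ell^2} = \mathrm{LB}_{\ell,t}(r).
\]
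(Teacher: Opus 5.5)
Your proof is correct and is essentially the paper's argument. The paper uses the same expansion $\|a-b\|_2^2=\|a\|_2^2+\|b\|_2^2-2a^\top b$ to write $1-\mathrm{CosSim}(a,b)=\frac{\|a-b\|_2^2-(\|a\|_2-\|b\|_2)^2}{2\|a\|_2\|b\|_2}$ and then drops the nonnegative square, which is exactly your reduction to $(\|a\|_2-\|b\|_2)^2\ge 0$; your follow-up combination with Lemma~\ref{lem:kv_dev} and (A3) also matches Step~1 of the theorem's proof.
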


\begin{proof}
Using the identity $\|a-b\|_2^2=\|a\|_2^2+\|b\|_2^2-2a^\top b$, we obtain $1-\mathrm{CosSim}(a,b)=\frac{\|a-b\|_2^2-(\|a\|_2-\|b\|_2)^2}{2\|a\|_2\|b\|_2}\le \frac{\|a-b\|_2^2}{2\|a\|_2\|b\|_2}$.
\end{proof}

Lemma~\ref{lem:cos_lb} allows us to convert a bound on the KV difference into a lower bound on the similarity, which is needed because $b(r)$ is defined through a similarity threshold.

\subsection{Theorem}\label{theorem}
\begin{theorem}[The shared prefix ratio determines a guaranteed depth]
\label{thm:r_to_b}
For each layer $\ell$ and step $t$, define $c_{\ell,t}\triangleq \frac{C_{\ell,t}^2}{2M_\ell^2}$ and $\mathrm{LB}_{\ell,t}(r)\triangleq 1-c_{\ell,t}g_{\ell,t}(1-r)^2$. Also define the worst-step lower bound $\underline{\mathrm{LB}}_\ell(r)\triangleq \min_t \mathrm{LB}_{\ell,t}(r)$. Then the following hold:
\begin{enumerate}[leftmargin=*]
    \item For every layer $\ell$ and step $t$, $\mathrm{sim}_{\ell,t}\ge \mathrm{LB}_{\ell,t}(r)$. Consequently, $\min_t \mathrm{sim}_{\ell,t}\ge \underline{\mathrm{LB}}_\ell(r)$.

    \item Define $b_{\mathrm{safe}}(r)\triangleq \max\Big\{ b\in\{0,1,\dots,L\}:\underline{\mathrm{LB}}_\ell(r)\ge\tau,\ \forall \ell\le b \Big\}$. Then $b_{\mathrm{safe}}(r)$ is determined by the shared prefix ratio $r$ through the lower bound $\underline{\mathrm{LB}}_\ell(r)$.

    \item The empirical shallow layer depth satisfies $b(r)\ge b_{\mathrm{safe}}(r)$.
\end{enumerate}
\end{theorem}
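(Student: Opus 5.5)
The plan is to chain the three auxiliary lemmas for part 1, and then obtain parts 2 and 3 from monotonicity of the defining sets.

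\textbf{Part 1.} Fix $\ell$ and $t$, and set $a=u_{\ell}^{\$}$ and $b=u_{\ell,t}^{\mathrm{ref}}$; both are nonzero by (A3).
\begin{enumerate}[leftmargin=*]
\item Lemma~\ref{lem:cos_lb} gives $1-\mathrm{sim}_{\ell,t}\le \|a-b\|_2^2/(2\|a\|_2\|b\|_2)$.
\item (A3) bounds the denominator from below by $2M_\ell^2$.
\item Lemma~\ref{lem:kv_dev} bounds the numerator by $C_{\ell,t}^2\,g_{\ell,t}(1-r)^2$. Squaring is legitimate because both sides are nonnegative.
\item Combining the two bounds gives $1-\mathrm{sim}_{\ell,t}\le c_{\ell,t}\,g_{\ell,t}(1-r)^2$, that is, $\mathrm{sim}_{\ell,t}\ge \mathrm{LB}_{\ell,t}(r)$.
\end{enumerate}
Taking the minimum over $t$ on both sides gives $\min_t\mathrm{sim}_{\ell,t}\ge \min_t\mathrm{LB}_{\ell,t}(r)=\underline{\mathrm{LB}}_\ell(r)$. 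Here I use that the minimum of pointwise-larger quantities is larger, and that $t$ ranges over the finite set $\{1,\dots,s\}$, so every minimum is attained.

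\textbf{Part 2.} This is essentially definitional. $\underline{\mathrm{LB}}_\ell(r)$ depends on the input only through $r$, via $g_{\ell,t}(1-r)$, together with the constants $c_{\ell,t}$ fixed by the assumptions. Hence the set $\{b:\underline{\mathrm{LB}}_\ell(r)\ge\tau\ \forall \ell\le b\}$ is a function of $r$ alone. This set is nonempty because it contains $b=0$ vacuously, and it is finite, so its maximum $b_{\mathrm{safe}}(r)$ exists and is a well-defined function of $r$. I would add a remark about monotonicity. If each $g_{\ell,t}$ is nondecreasing, then $\underline{\mathrm{LB}}_\ell$ is nondecreasing in $r$, so $b_{\mathrm{safe}}$ is nondecreasing in $r$, matching O2. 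I would state this as a remark conditional on that extra monotonicity, since (A2) does not assert it.

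\textbf{Part 3.} I argue by set inclusion. Let $b^\ast=b_{\mathrm{safe}}(r)$. For every $\ell\le b^\ast$ we have $\underline{\mathrm{LB}}_\ell(r)\ge\tau$, and part 1 then gives $\min_t\mathrm{sim}_{\ell,t}\ge\tau$. So $b^\ast$ lies in the set whose maximum defines $b(r)$, and therefore $b(r)\ge b^\ast$.

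\textbf{Main obstacle.} The delicate step is the bookkeeping between the worst-case and averaged quantities in Lemma~\ref{lem:kv_dev}. (A4) works with an average over $i$ and $h$, and Lemma~\ref{lem:attn_dev} is applied pointwise and then averaged to produce $\bar\rho_{\ell,t}(r)$. I would check that the averaging in (A4) and the definition of $\bar\rho$ in (A2) use the same weighting, so that the lemma applies exactly as stated. I would also confirm that the constants do not secretly depend on the request, so that $b_{\mathrm{safe}}$ is genuinely a function of $r$ only. Beyond taking Lemma~\ref{lem:kv_dev} as given, everything else is routine.
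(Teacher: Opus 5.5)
Your proposal is correct and follows the paper's proof essentially step for step: Lemma~\ref{lem:kv_dev} combined with Lemma~\ref{lem:cos_lb} and (A3) gives the stepwise bound, taking the minimum over $t$ gives the worst-step bound, and set inclusion gives $b(r)\ge b_{\mathrm{safe}}(r)$. Your extra remark that monotonicity of $b_{\mathrm{safe}}$ in $r$ needs an additional assumption on $g_{\ell,t}$ is a sensible point that the paper does not make.
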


\begin{proof}
We prove the theorem in three steps.

\paragraph{Step 1: Lower-bounding the step-wise similarity.}
By Lemma~\ref{lem:kv_dev}, $\|u_{\ell,t}^{\mathrm{ref}}-u_{\ell}^{\$}\|_2 \le C_{\ell,t}g_{\ell,t}(1-r)$. We then convert this distance bound into a similarity bound. Applying Lemma~\ref{lem:cos_lb} with $a=u_{\ell}^{\$}$ and $b=u_{\ell,t}^{\mathrm{ref}}$, and using (A3), gives $1-\mathrm{sim}_{\ell,t} = 1-\mathrm{CosSim}\!\big(u_{\ell}^{\$},u_{\ell,t}^{\mathrm{ref}}\big) \le \frac{\|u_{\ell,t}^{\mathrm{ref}}-u_{\ell}^{\$}\|_2^2}{2M_\ell^2} \le \frac{C_{\ell,t}^2}{2M_\ell^2}\,g_{\ell,t}(1-r)^2 = c_{\ell,t}g_{\ell,t}(1-r)^2$. Rearranging yields $\mathrm{sim}_{\ell,t}\ge 1-c_{\ell,t}g_{\ell,t}(1-r)^2=\mathrm{LB}_{\ell,t}(r)$, which proves the first claim.

\paragraph{Step 2: Lower-bounding the worst-step similarity.}
Since the shallow layer depth is defined using the minimum similarity across steps, we take the minimum of the step-wise lower bound over all steps: $\min_t \mathrm{sim}_{\ell,t} \ge \min_t \mathrm{LB}_{\ell,t}(r)=\underline{\mathrm{LB}}_\ell(r)$. Thus, $\underline{\mathrm{LB}}_\ell(r)$ serves as a conservative lower bound on the similarity of layer $\ell$ across all steps.

\paragraph{Step 3: Determining a guaranteed depth from $r$.}
By definition, $b_{\mathrm{safe}}(r)$ is the deepest consecutive shallow layer depth such that every layer up to that depth satisfies the conservative lower bound $\underline{\mathrm{LB}}_\ell(r)\ge\tau$. Since $\underline{\mathrm{LB}}_\ell(r)$ is a function of $r$, the resulting depth $b_{\mathrm{safe}}(r)$ is also determined by $r$.

Finally, for every layer $\ell\le b_{\mathrm{safe}}(r)$, we have $\underline{\mathrm{LB}}_\ell(r)\ge\tau$. Combining this with the result from Step 2 gives $\min_t \mathrm{sim}_{\ell,t}\ge\tau$ for all $\ell\le b_{\mathrm{safe}}(r)$. Therefore, by the definition of $b(r)$, the empirical shallow layer depth satisfies $b(r)\ge b_{\mathrm{safe}}(r)$. This shows that the shared prefix ratio $r$ determines a guaranteed shallow layer depth through the lower bound $\underline{\mathrm{LB}}_\ell(r)$, which completes the proof.
\end{proof}

\section{Implementation Details}\label{app:3}
We implement \bicache on top of PyTorch. On LLaDA, a representative open-sourced DLM, we implement \bicache with $\sim$3K lines of code. To implement the KV store $H$ in layer-partitioned caching (\S\ref{sec:5.2}), we map each shared prefix to a deterministic 64-bit hash of its tokenized prefix sequence and use the resulting identifier as the shared prefix cache key. In our implementation, we use xxHash~\cite{xxhash}, a fast non-cryptographic hash function, to map each shared-prefix token ID sequence to a deterministic 64-bit identifier, which is then used as the cache key for lookup. At serving time, \bicache extracts the shared prefix, computes its hash, and retrieves the corresponding cached KVs on a hit; on a miss, it computes the shared prefix KVs once and inserts them into $H$.

Since GPU memory is finite, storing a large number of shared prefix KVs can cause out-of-memory issues. To manage the KV cache efficiently, we follow an eviction technique as in prior works \cite{vllm}. Specifically, we used a first-in, first-out policy, a standard approach that evicts the oldest cached KVs first. 

\begin{table*}[t]
\centering
\small
\setlength{\tabcolsep}{1pt}
\begin{tabular}{ccc cc cc cc}
\toprule
\multirow{2}{*}{\textbf{Method}}
& \multicolumn{2}{c}{\textbf{ARC-C}}
& \multicolumn{2}{c}{\textbf{GPQA}}
& \multicolumn{2}{c}{\textbf{MATH}}
& \multicolumn{2}{c}{\textbf{GSM8K}} \\
& \textbf{Thr.} & \textbf{Acc.}
& \textbf{Thr.} & \textbf{Acc.}
& \textbf{Thr.} & \textbf{Acc.}
& \textbf{Thr.} & \textbf{Acc.} \\
\midrule

No caching
& 17.28 & 87
& 15.87 & 27
& 15.97 & 20
& 10.8 & 60 \\

vLLM
& 33.87 {\scriptsize(+96.0\%)} & 1 {\scriptsize(-86.0\%)}
& 28.46 {\scriptsize(+79.3\%)} & 0 {\scriptsize(-27.0\%)}
& 32.85 {\scriptsize(+105.7\%)} & 0 {\scriptsize(-20.0\%)}
& 30.55 {\scriptsize(+182.9\%)} & 42 {\scriptsize(-18.0\%)} \\

\textbf{\bicache}
& \textbf{32.19} {\scriptsize\textbf{(+86.3\%)}} & \textbf{85} {\scriptsize\textbf{(-2.0\%)}}
& \textbf{27.43} {\scriptsize\textbf{(+72.8\%)}} & \textbf{27} {\scriptsize\textbf{(+0.0\%)}}
& \textbf{31.43} {\scriptsize\textbf{(+96.8\%)}} & \textbf{24} {\scriptsize\textbf{(+4.0\%)}}
& \textbf{28.25} {\scriptsize\textbf{(+161.6\%)}} & \textbf{56} {\scriptsize\textbf{(-4.0\%)}} \\
\midrule

Fast-dLLM
& 22.25 & 85
& 21.82 & 26
& 16.97 & 23
& 11.98 & 72 \\

\textbf{\shortstack[c]{\bicache\\+Fast-dLLM}}
& \raisebox{0.3\normalbaselineskip}[0pt][0pt]{\makecell[c]{\textbf{32.33}\\[-1pt]{\scriptsize\textbf{(+45.3\%)}}}}
& \raisebox{0.3\normalbaselineskip}[0pt][0pt]{\makecell[c]{\textbf{87}\\[-1pt]{\scriptsize\textbf{(+2.0\%)}}}}
& \raisebox{0.3\normalbaselineskip}[0pt][0pt]{\makecell[c]{\textbf{32.01}\\[-1pt]{\scriptsize\textbf{(+46.7\%)}}}}
& \raisebox{0.3\normalbaselineskip}[0pt][0pt]{\makecell[c]{\textbf{25}\\[-1pt]{\scriptsize\textbf{(-1.0\%)}}}}
& \raisebox{0.3\normalbaselineskip}[0pt][0pt]{\makecell[c]{\textbf{32.57}\\[-1pt]{\scriptsize\textbf{(+91.9\%)}}}}
& \raisebox{0.3\normalbaselineskip}[0pt][0pt]{\makecell[c]{\textbf{17}\\[-1pt]{\scriptsize\textbf{(-6.0\%)}}}}
& \raisebox{0.3\normalbaselineskip}[0pt][0pt]{\makecell[c]{\textbf{29.12}\\[-1pt]{\scriptsize\textbf{(+143.1\%)}}}}
& \raisebox{0.3\normalbaselineskip}[0pt][0pt]{\makecell[c]{\textbf{74}\\[-1pt]{\scriptsize\textbf{(+2.0\%)}}}} \\
\bottomrule
\end{tabular}

\caption{Throughput (tokens/s) and accuracy (\%) with LLaDA-1.5. We use the same benchmarks and comparison settings as in \S\ref{sec:6}. The upper part reports main improvements; the lower part shows seamless integration results. Percentages of \bicache are relative to no caching (upper part) and Fast-dLLM (lower part). Bold: \bicache results.}
\label{tab:5}
\end{table*}

\begin{table*}[t]
\centering
\small
\setlength{\tabcolsep}{1pt}
\begin{tabular}{c c c c c c}
\toprule
Metric
& No caching
& vLLM
& \textbf{\bicache}
& Fast-dLLM
& \textbf{\bicache + Fast-dLLM} \\
\midrule

Throughput
& 55.7
& 93.6 {\scriptsize(+68.0\%)}
& \textbf{91.4 {\scriptsize(+64.1\%)}}
& 65.3
& \textbf{98.4 {\scriptsize(+50.7\%)}} \\

Accuracy
& 85.9
& 0.0 {\scriptsize(-85.9\%)}
& \textbf{95.0 {\scriptsize(+9.1\%)}}
& 84.5
& \textbf{82.0 {\scriptsize(-2.5\%)}} \\
\bottomrule
\end{tabular}

\caption{Throughput (tokens/s) and accuracy (\%) with a low shared prefix ratio. We use ARC-Challenge-Chat and same comparison settings as in \S\ref{sec:6}. Bold: \bicache results.}
\label{tab:6}
\end{table*}
\begin{table*}[t]
\centering

\begin{minipage}[t]{0.545\linewidth}
\centering
\small
\setlength{\tabcolsep}{1pt}
\resizebox{\linewidth}{!}{%
\begin{tabular}{c cc cc cc cc cc}
\toprule
\multirow{2}{*}{\textbf{Method}}
& \multicolumn{2}{c}{\textbf{0.91}}
& \multicolumn{2}{c}{\textbf{0.93}}
& \multicolumn{2}{c}{\textbf{0.95}}
& \multicolumn{2}{c}{\textbf{0.97}}
& \multicolumn{2}{c}{\textbf{0.99}} \\
& \textbf{Thr.} & \textbf{Acc.}
& \textbf{Thr.} & \textbf{Acc.}
& \textbf{Thr.} & \textbf{Acc.}
& \textbf{Thr.} & \textbf{Acc.}
& \textbf{Thr.} & \textbf{Acc.} \\
\midrule

\textbf{\bicache}
& 92.4 & 39.1
& 92.1 & 48.4
& 92.0 & 45.9
& \textbf{91.4} & \textbf{55.5}
& 91.2 & 53.8 \\
\midrule

\textbf{\shortstack[c]{\bicache\\+Fast-dLLM}}
& \raisebox{0.35\normalbaselineskip}{103.2}
& \raisebox{0.35\normalbaselineskip}{46.3}
& \raisebox{0.35\normalbaselineskip}{103.2}
& \raisebox{0.35\normalbaselineskip}{57.5}
& \raisebox{0.35\normalbaselineskip}{104.3}
& \raisebox{0.35\normalbaselineskip}{55.2}
& \raisebox{0.35\normalbaselineskip}{\textbf{102.5}}
& \raisebox{0.35\normalbaselineskip}{\textbf{63.9}}
& \raisebox{0.35\normalbaselineskip}{103.2}
& \raisebox{0.35\normalbaselineskip}{65.0} \\

\bottomrule
\end{tabular}%
}
\subcaption{Impact of threshold $\tau$.}
\label{tab:7a}
\end{minipage}
\hfill
\begin{minipage}[t]{0.445\linewidth}
\centering
\small
\setlength{\tabcolsep}{1pt}
\resizebox{\linewidth}{!}{%
\begin{tabular}{c cc cc cc cc}
\toprule
\multirow{2}{*}{\textbf{Method}}
& \multicolumn{2}{c}{\textbf{8}}
& \multicolumn{2}{c}{\textbf{16}}
& \multicolumn{2}{c}{\textbf{32}}
& \multicolumn{2}{c}{\textbf{64}} \\
& \textbf{Thr.} & \textbf{Acc.}
& \textbf{Thr.} & \textbf{Acc.}
& \textbf{Thr.} & \textbf{Acc.}
& \textbf{Thr.} & \textbf{Acc.} \\
\midrule

\textbf{\bicache}
& 89.1 & 55.5
& \textbf{91.4} & \textbf{55.5}
& 92.7 & 54.9
& 93.4 & 53.5 \\
\midrule

\textbf{\shortstack[c]{\bicache\\+Fast-dLLM}}
& \raisebox{0.35\normalbaselineskip}{99.6}
& \raisebox{0.35\normalbaselineskip}{64.8}
& \raisebox{0.35\normalbaselineskip}{\textbf{102.5}}
& \raisebox{0.35\normalbaselineskip}{\textbf{63.9}}
& \raisebox{0.35\normalbaselineskip}{104.1}
& \raisebox{0.35\normalbaselineskip}{61.9}
& \raisebox{0.35\normalbaselineskip}{104.7}
& \raisebox{0.35\normalbaselineskip}{61.9} \\

\bottomrule
\end{tabular}%
}
\subcaption{Impact of refresh interval $\Delta$.}
\label{tab:7b}
\end{minipage}

\caption{Additional experiments evaluating the impact of the threshold ($\tau$) and refresh interval ($\Delta$) on throughput (tokens/s) and accuracy (\%) on GSM8K. Bold text denotes the selected configuration.}
\label{tab:7}
\end{table*}

\begin{table}[t]
\centering
\small
\begin{tabular}{c|c}
\toprule

Benchmark & Shared prefix ratio \\

\midrule

ARC-C & 88.2\% \\
GPQA & 75.9\% \\
MATH & 90.1\% \\
GSM8K & 94.5\% \\

\bottomrule
\end{tabular}

\caption{Average shared prefix ratio in each benchmark.}
\label{tab:8}
\end{table}

\section{Additional Analysis and Experiments}\label{app:D}

Here, we present additional experiment results beyond those reported in \S\ref{sec:6}:
\squishlist
    \item Appendix~\ref{app:sys}: Robustness of \bicache to system prompt changes
    \item Appendix~\ref{app:D2}: Profiling overhead for different values of $M$
    \item Appendix~\ref{app:D3}: Generalizability of \bicache to a different DLM architecture, LLaDA-1.5 \cite{llada-1.5}
    \item Appendix~\ref{app:D4}: Performance of \bicache under a low shared prefix ratio
    \item Appendix~\ref{app:D5}: Parameter analysis on GSM8K
    \item Appendix~\ref{app:ratio}: Shared prefix ratios of the evaluation benchmarks
    \item Appendix~\ref{app:math}: Analysis of the relatively low absolute accuracy on MATH
    \item Appendix~\ref{app:rob}: Robustness of \bicache to shallow-layer depth estimation errors
\squishend

\begin{figure}[t]
\centering
\begin{minipage}[t]{0.4\linewidth}
    \centering
    \includegraphics[width=\linewidth]{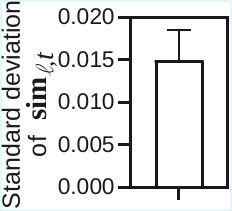}
    \caption{Standard deviation under system prompt changes.}
    \label{fig:11}
\end{minipage}
\hfill
\begin{minipage}[t]{0.58\linewidth}
    \centering
    \includegraphics[width=\linewidth]{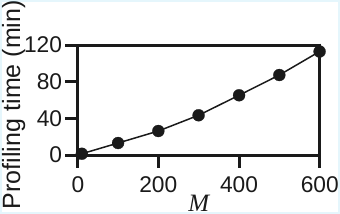}
    \caption{Profiling time changes for $M$.}
    \label{fig:12}
\end{minipage}
\end{figure}

\subsection{Analysis of $b$ for System Prompt Changes}\label{app:sys}
In \S\ref{sec:4.2}, we show that the shared prefix ratio $r$ is strongly related to the shallow layer depth $b$ across different tasks. One may wonder whether this relationship still holds when the system prompt itself changes.
In practice, system prompts are typically identical across requests. However, in some cases, a small subset of tokens may vary (e.g., timestamps, metadata fields, or minor changes in instruction wording). In such cases, only a few tokens within the shared prefix differ while most of the prefix remains identical across requests \cite{preble}.

We investigate this effect as follows. For each $r \in \{0.50, 0.51, {\ldots}, 1.00\}$, we generate 500 samples while keeping $r$ fixed. Using the system prompt in Figure~\ref{fig:10} as a source template, we instruct an LLM to generate diverse variants by varying role descriptions, safety instructions, tone specifications, and output-format constraints while avoiding near-duplicate rewrites. We further instruct that at least half of the tokens in each generated prompt differ from those in the source template. We then measure the standard deviation of $\mathrm{sim}_{\ell,t}$ across different $r$ values.
A lower standard deviation indicates that $\mathrm{sim}_{\ell,t}$ remains stable even when the system prompt changes, which suggests that $b$ is not sensitive to such changes.

Fig. \ref{fig:11} shows the average standard deviation of $\mathrm{sim}_{\ell,t}$ across ratios, with error bars. The deviation remains low (0.014 on average), indicating that system prompt changes have little effect on $b$. Combined with the observation in \S\ref{sec:4.2}, this supports our design choice of profiling $b$ only once offline for each model.

\subsection{Profiling Time}\label{app:D2}
In \S\ref{sec:6.6}, we evaluate which value of $M$ is appropriate. Here, we present the profiling overhead for different values of $M$. Fig.~\ref{fig:12} shows the time required for profiling as the number of profiling requests $M$ increases. For each value of $M$, \bicache profiles $M$ requests for each shared prefix ratio $r \in \{0.01,{\ldots},1.00\}$. Thus, for example, $M{=}600$ means profiling a total of $600 \times 100$ requests across all shared prefix ratios.

As shown in the figure, the profiling time linearly increases as $M$ increases. In particular, when $M{=}500$ that we use, profiling takes 87.5 minutes. Although this cost is nontrivial, profiling is a one-time offline procedure performed at model deployment rather than during online serving, and it does not need to be repeated when the task changes (\S\ref{sec:4.2}) or when the shared prefix changes, such as through a system prompt update (\S\ref{app:sys}). Therefore, it does not affect the runtime latency of user requests. Moreover, the profiling cost can be further reduced by profiling only a subset of representative $r$ values or by using a more efficient search algorithm instead of exhaustively profiling all ratios.

\subsection{Generalizability on Other DLM Architecture}\label{app:D3}
To evaluate the effectiveness of \bicache in other DLMs, we conduct a new experiment on LLaDA-1.5 \cite{llada-1.5}, which has a distinct DLM architecture rather than being a minor variant of the original LLaDA model, since LLaDA-1.5 differs from LLaDA in its training recipe, such as applying variance-reduced preference optimization for preference alignment for ELBO-based preference optimization. This allows us to examine whether \bicache's shared prefix KV reuse principle holds beyond the original LLaDA model.

Table~\ref{tab:5} shows the experiment results. First, across the four benchmarks, \bicache improves throughput over no caching by 72.8\% on GPQA to 161.6\% on GSM8K, while closely matching the throughput of vLLM. Second, \bicache maintains accuracy within 4\% of the no caching, which avoids the severe accuracy collapse observed with vLLM. The results are consistent with our results in \S\ref{sec:6} and demonstrate that \bicache generalizes effectively to a different DLM architecture.

\subsection{Results on Low Shared Prefix Ratio}\label{app:D4}
We prepare low shared prefix ratio setting for ARC-Challenge-Chat, a representative task. Specifically, we reduce the shared prefix ratio from 88\% in \S\ref{sec:6} to 22\%. Table \ref{tab:6} shows the throughput and accuracy under the low shared prefix ratio. \bicache improves throughput over no caching by 64.1\% while avoiding the accuracy collapse observed with vLLM. In addition, combining \bicache with Fast-dLLM achieves the highest throughput, demonstrating that \bicache remains effective under limited prefix sharing and is complementary to intra-request acceleration.

\subsection{Parameter Analysis on GSM8K}\label{app:D5}
In \S\ref{sec:6.6}, we conduct a parameter analysis on ARC-Challenge-Chat. Here, we extend the parameter analysis to another benchmark.
Table~\ref{tab:7} reports GSM8K results for different threshold $\tau$ and refresh interval $\Delta$ settings. All other experiment settings follow \S\ref{sec:6}.

\noindent\textbf{Threshold.}
Table~\ref{tab:7a} varies the threshold $\tau$ while fixing $\Delta = 16$. Increasing $\tau$ improves accuracy while throughput remains largely stable. Specifically, $\tau = 0.97$ achieves high accuracy for both \bicache and \bicache + Fast-dLLM while maintaining comparable throughput.

\noindent\textbf{Refresh Interval.}
Table~\ref{tab:7b} varies the refresh interval $\Delta$ while fixing $\tau = 0.97$. Decreasing $\Delta$ improves accuracy but reduces throughput. Specifically, $\Delta = 16$ achieves high accuracy for both \bicache and \bicache + Fast-dLLM while maintaining comparable throughput.

\subsection{Shared Prefix Ratio in Benchmarks}\label{app:ratio}
Here, we report the shared prefix ratios of the benchmarks used in \S\ref{sec:3.2} and \S\ref{sec:6}. Table~\ref{tab:8} shows the average shared prefix ratio for each benchmark. The ratio varies across benchmarks because the benchmark-specific user prompt lengths differ, while we use the same fixed system prompt. GSM8K has the highest shared prefix ratio at 94.5\%, while GPQA has the lowest at 75.9\%. Overall, these values are broadly comparable to the real-world range of shared prefix ratios discussed in \S\ref{sec:3.1}, although some benchmarks fall slightly below that range. In light of the results in \S\ref{sec:3.2}, these results suggest that a higher shared prefix ratio tends to yield larger throughput improvements.

\subsection{Discussion of MATH Accuracy}\label{app:math}
As shown in Tables~\ref{tab:1}, \ref{tab:2}, and \ref{tab:5}, all evaluated methods achieve relatively low absolute accuracy on MATH. This trend is not specific to \bicache; recent DLM acceleration studies, including Fast-dLLM \cite{fastdllm} and ES-dLLM \cite{es-dllm}, have reported similarly low accuracy. This is because MATH is a challenging competition-level benchmark that requires multi-step symbolic reasoning, making it substantially more difficult than arithmetic benchmarks such as GSM8K. \bicache maintains accuracy comparable to no caching across all three tables, indicating that the low absolute accuracy reflects the difficulty of MATH for current DLMs rather than degradation caused by \bicache.

\subsection{Robustness to Shallow Layer Depth Estimation Errors}\label{app:rob}
We additionally measure the error introduced by estimating the shallow layer depth $b$ using the profiling mechanism of \bicache that happens only once per model (\S\ref{sec:5.1}), rather than measuring it separately for each request. On the 32-layer LLaDA model, the profiling-based estimate differs from the per-request measurement by only 0.9 layers on average, corresponding to 2.8\% of the total depth. Thus, the estimated depth is typically within one layer of the per-request measurement. Note that \bicache delivers substantial throughput and latency gains (as reported in \S\ref{tab:3}), which indicates that this small estimation error has little impact on serving performance.

\section{AI Assistants in Research or Writing}
We use AI assistants, i.e., Gemini and ChatGPT, for proofreading support, including grammar, typo, and wording checks throughout the manuscript. The assistants are used solely to identify errors in our original writing and are not used to develop the core research ideas, methods, experiments, results, or conclusions; all such content is created and verified by the authors.

\end{document}